\def\arxiv{}
\theoremstyle{plain}
\newtheorem{definition}{Definition}
\newtheorem{theorem}{Theorem}
\newtheorem{lemma}{Lemma}
\theoremstyle{definition}
\newtheorem{example}{Example}
\newtheorem{construction}{Construction}
\newtheoremstyle{simple}
{}                % Space above
{}                % Space below
{}        % Theorem body font % (default is "\upshape")
{}                % Indent amount
{}       % Theorem head font % (default is \mdseries)
{\hspace*{-4pt}}               % Punctuation after theorem head % default: no punctuation
{ }               % Space after theorem head
{}                % Theorem head spec
\theoremstyle{simple}
\newenvironment{proofsansqed}{\noindent\paragraph*{{Proof roadmap}}{}}
\newenvironment{proofdeferred}{\noindent\paragraph*{{Proof}}{}}
\definecolor{markercolor}{rgb}{.19, .19, .48}
\newcounter{tecounter}
\newenvironment{tightenumerate}
{
    \begin{list}{\arabic{tecounter}\addtocounter{tecounter}{1})}{%
    \setcounter{tecounter}{1}
        \setlength{\leftmargin}{14pt}
        \setlength{\topsep}{1pt}
        \setlength{\partopsep}{0pt}
        \setlength{\itemsep}{2pt}
        \setlength\labelwidth{10pt}%
        }
        \ignorespaces}
{\unskip\end{list}}
\newenvironment{tightitemize2}
{
    \begin{list}{$\bullet$}{%
        \setlength{\leftmargin}{18pt}
        \setlength{\topsep}{3pt}
        \setlength{\itemsep}{4pt}%
        \setlength\labelwidth{12pt}%
        }
        \ignorespaces}
{\unskip\end{list}}
\def\hat{\widehat}
\newlength{\dhatheight}
\newcommand{\doublehat}[1]{%
\settoheight{\dhatheight}{\ensuremath{\hat{#1}}}%
\addtolength{\dhatheight}{-0.35ex}%
\hat{\vphantom{\rule{1pt}{\dhatheight}}%
\smash{\hat{#1}}}}
\newcommand{\compcent}[1]{\vcenter{\hbox{$#1\circ$}}}
\newcommand{\comp}{\mathbin{\mathchoice
  {\compcent\scriptstyle}{\compcent\scriptstyle}
  {\compcent\scriptscriptstyle}{\compcent\scriptscriptstyle}}}
\newcommand{\rind}[1]{^{(#1)}}  % "robot index"
\newcommand{\rindi}{\rind{i}} % laziness
\newcommand{\rindj}{\rind{j}} % laziness
\newcommand{\rindn}{\rind{n}}
\newcommand{\Z}{\mathbb{Z}}
\newcommand{\R}{\mathbb{R}}
\newcommand{\plus}{^{+}}
\newcommand{\vin}{\rotatebox[origin=c]{-90}{\footnotesize$\in$}}  % different rotated \in 
\newcommand{\Vin}{\rotatebox[origin=c]{90}{{\footnotesize$\in$}}}
\newcommand{\seen}[1]{\ensuremath{#1^{\text{occ}}}} % typesetting implementation details?
\newcommand{\foreign}[1]{\emph{#1}} % I can't decide whether to emph or not; we should be consistent
\newcommand{\vmin}{v_{\rm min}}
\newcommand{\vmax}{v_{\rm max}}
\newcommand{\vhatmin}{\hat{v}_{\rm min}}
\newcommand{\vhatmax}{\hat{v}_{\rm max}}
\newcommand{\disks}{_{\text{disks}}}
\newcommand{\single}{_{\text{single}}}
\newcommand{\gobble}[1]{}
\newcommand{\gobblesubs}[2]{{#2}}
\date{\vspace*{-2\baselineskip}}
\begin{document}

% Reality as a simulation of reality: robot simulability, limits, impossibility, and a physical demonstration
% Reality as a simulation of reality: robot simulability, fundamental limits, and a physical demonstration
\title{Reality as a simulation of reality: robot illusions,\\fundamental limits, and a physical demonstration}

\IEEEoverridecommandlockouts

\author{Dylan A. Shell and Jason M. O'Kane\thanks{D. A. Shell is with the
Department of Computer Science and Engineering at Texas A\&M University.  J. M.
O'Kane is with the Department of Computer Science and Engineering at the
University of South Carolina.  This material is based upon work supported by
the National Science Foundation under Grant Nos.  1526862, 1527436, 1849249,
and 1849291.}}

\maketitle

\begin{abstract}
We consider problems in which robots conspire to present a view of the world
that differs from reality.  The inquiry is motivated by the problem of
validating robot behavior \emph{physically} despite there being a discrepancy
between the robots we have at hand and those we wish to study, or the
environment for testing that is available versus that which is desired, or
other potential mismatches in this vein.  After formulating the concept of a
convincing illusion, essentially a notion of system simulation that takes place
in the real world, we examine the implications of this type of simulability in
terms of infrastructure requirements. Time is one important resource: some
robots may be able to simulate some others but, perhaps, only at a rate that is
slower than real-time.  This difference gives a way of relating the simulating
and the simulated systems in a form that is relative. We establish some
theorems, including one with the flavor of an impossibility result, and
providing several examples throughout.  Finally, we present data from a simple
multi-robot experiment based on this theory, with a robot navigating amid an
unbounded field of obstacles.
\end{abstract}

\bigskip
{\footnotesize
\noindent``Truth is beautiful, without doubt; but so are lies.''---\,Ralph Waldo Emerson

}

\IEEEpeerreviewmaketitle

\section{Motivation and Overview}

%If the field of robotics has not yet reached the stage of
%being a scientific discipline, part of the reason must be that our papers
%seldom report experiments. Instead, most involve demonstrations---we
%hesitate to use the phrase `mere demonstrations' because, though the situation is
%improving, getting things to work on a robot remains a nontrivial undertaking. 
%%
%But what constitutes a robot demonstration?

Robotics papers usually include evidence of algorithms or controllers that have
been executed or evaluated on some kind of system, typically comprising either
physical robots or a substitute.  But what constitutes a robot demonstration,
exactly?  One division is generally drawn between software simulation and real
robots.  This is, at best, a rather rough distinction for there is a spectrum
of simulators spanning a wide range of fidelities. Actually, the same might be
said for physical robots: data and conclusions from robots using, say, a
sophisticated external motion capture system, or cameras with visual markers,
might be representative of robots operating in the field with GPS. Or, on the
other hand, depending on what you're trying to do, they might not.

What is certain is that there are more choices, between full software
simulation and full physical implementation, than are generally recognized or
garner attention (see Figure~\ref{fig:phil}).  Inasmuch as this is critical for
robotics as a scientific enterprise, it is perhaps curious that there has been
little formal treatment of representativeness or verisimilitude beyond the
complete hardware and software extremes, and their consideration.  This paper's
\foreign{raison d'\^etre} is to initiate a close, systematic examination of
these other options.
% Perhaps the previous paragraph makes a bad move. It brings in
% "representativeness" in the way that causes the reader to think about
% imprecision. It then changes topic. I just don't have a better idea right
% now.

\begin{figure}
\includegraphics[width=\linewidth]{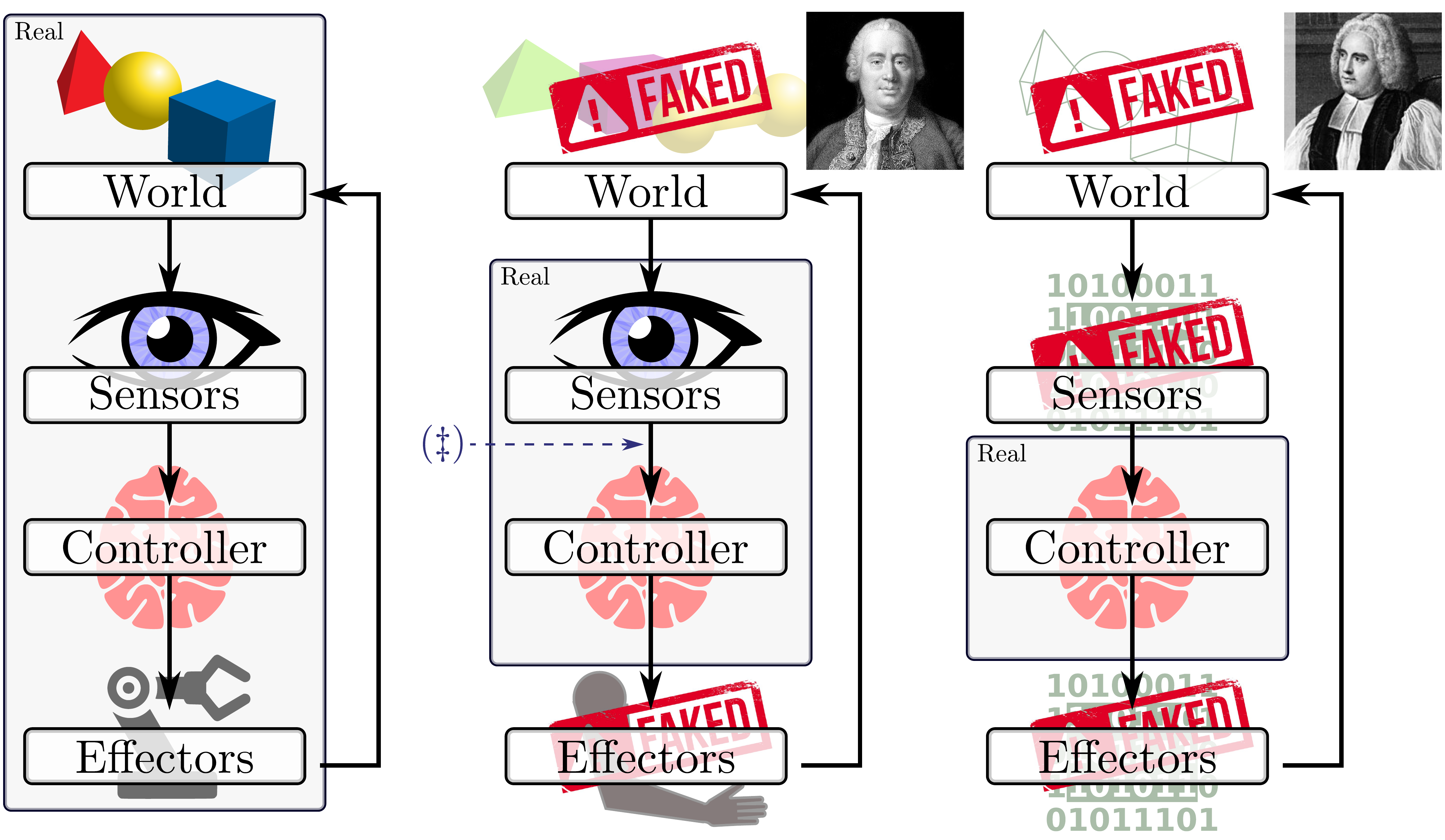}
\vspace*{-14pt}
\caption{An illustration of different modes of fakery matched with appropriate
intellectual positions: the orthodox view (left) and two forms of philosophical
skepticism (center and right).  The left column represents an end-to-end
physical robot experiment with every part being real.  
The middle column, bearing the bust of David Hume,
involves real sensors but also reason to doubt the veracity of the world
they report. 
The rightmost column, headed by Bishop Berkeley, has a real controller but all
other elements are mere software---it represents a robot experiment conducted
in simulation (for the commonplace use of that term) where information is
injected directly into the robot's cortex.
Hume's column is the case studied in the paper: robots perceive a
world mediated by sensors that are grounded in the real world, but
it is a world of ``nothing but sophistry and illusion.''
\label{fig:phil} } 
\vspace*{-14pt}
\end{figure}

We want to understand how one physical system may be used to mimic the behavior
of another.  By system, we are considering a setting where observations are
made (via sensors) and used to choose actions that are effected (via
actuators) and this unfolds over time.  We begin with a simplified
discrete-time setting (Definition~\ref{defn:sys}) where we can contemplate exact emulation (Definition~\ref{defn:illusion}), rather than
considering approximate or imprecise imitation.
The central features which distinguish the approach from other formalisms of
emulation between robot systems (see Section~\ref{sec:related}) are the possibility of variable time expansion (somewhat akin to Milner's weak bisimulation~\cite{milner1982four}) and a narrow focus on mimickry only up to the perceptual capabilities of the system under emulation.

We then formulate some
particular questions, such as: 
``What are the resources involved, how do we quantify resource requirements,
and relate them?'' (Definition~\ref{defn:slowdown},
Theorem~\ref{thm:noboundedillusion}), 
``How do we compose or nest such systems?'' (Theorem~\ref{thm:comp}), 
``What happens to these things when systems are modified
(Theorem~\ref{thm:coarser_obs})'', etc.
% Put more things here when we know what they are.
Even this simple setting is replete with possibilities, some of which are
both exciting and enticing (Section~\ref{sec:poss_ideas}). 

% Make sure we emphasize that this has real sensors in the loop.
In terms of immediate utility for the practitioner, the present paper shows how
to conduct a novel sort of emulation with real hardware where sensors,
rather than being faked out of whole cloth---as is usually done with computational or mathematical
models that are highly idealized coarse approximations---provide real signals.
As the instances we study herein show, there may be considerable freedom in
choosing different ways to emulate one system with another, with implications
for future robotic laboratory infrastructure.

\section{Related work}\label{sec:related}

\gobblesubs{%
Part of the inspiration for the present work derives from recent studies of animal
biomechanics, including their behavior when their `sensor inputs' are altered
within a feedback loop. We first describe these, then turn to relevant work
closer to home in the robotics community.%
}{%
}

\gobblesubs{%
\subsection{Animal studies}
}{%
\subsection{Animal studies: The inspiration for the present work}
}

%\begin{figure}
%\includegraphics[scale=0.25]{}
%\caption{A photograph, reproduced from \citep{takalo12},  showing 
%laboratory apparatus comprising a spherical projection surface and a track
%ball that enables the detailed study of the walking behavior of the cockroach
%{\em (periplaneta americana)} by giving it synthetic visual stimuli.\label{fig:roach}
%}
%\end{figure}

For decades,  biologists have sought to chart the perceptual limits
of organisms and to understand how informational mismatches affect behavior~\cite{waterman89animal,dusenbery92sensory}. Recent years have seen virtual and augmented reality technologies
being used in this quest with considerable enthusiasm.  The animals studied range from small
mammals~\citep{holscher05} down to insects~\cite{taylor08}, being studied both while
walking~\citep{takalo12} and
flying~\citep{fry04,fry08,luu11}.
The journal \emph{Current Zoology} recently devoted a special issue to the
topic~\citep{witte17}. % journal special issue
As a concrete example, Takalo et al. constructed a
laboratory apparatus comprising a spherical
projection surface and a track ball that enables
the detailed study of the walking behavior of the
cockroach {\em (periplaneta americana)} 
\gobblesubs{%
by giving
it synthetic visual stimuli~\cite{takalo12}.
These sorts of technologies, refashioned as scientific apparatus, have the
advantage of unprecedented flexibility: they permit the study of an organism
with precise regulation of the environs, even in naturalistic settings,
enabling a systems-level understanding of the organism.
}{%
by providing it with synthetic visual stimuli,
ultimately to give a systems-level understanding of 
the organism~\cite{takalo12}.
}

%\citep{fry04, fry08} % flight
%\citep{taylor08} % flight review
%\citep{takalo12} % running roach
%\citep{luu11} % honey bees

\subsection{Practical simulation in software}

Software simulations are an inescapable part of the current robotics research
landscape, 
with the community devoting much time and attention to related questions,
including through the biennial {\sc simpar} conference. %
The software traces out some element of a robot's execution in a
virtual, rather than physical world, generating artificial sensor readings 
(or sometimes \gobble{other higher-level} state information), and evolving the robot system
forward in time.

\gobblesubs{
At the center of most discussions about purely software robot simulation is the
concept of fidelity: How closely does the simulator mimic the real world?}{
Center to most discussions about software simulation are 
considerations of fidelity: How closely does the simulator mimic the real
world?  }
High-fidelity simulation software like\gobblesubs{, for example,}{}
Gazebo~\cite{koenig2004design} has been developed to account for many of the
complications experienced by the \gobble{types of}complex robots native to many
research labs.
But fidelity may be traded for other features as
\gobblesubs{, for example, in the Stage simulator which aims to}%
{some efforts} strike a ``useful balance between fidelity and
abstraction''~\cite{vaughan2008massively}.
Other simulators, designed for specific robot
types~\cite{jakobi1997evolutionary,craighead2007survey,pinciroli2011argos,pekarek2011backwards,curet2010versatile,stanley1992computer},
optimization/control schemes~\cite{todor12mujoco}, and application
domains~\cite{carpin2007usarsim,mondada2004swarm,rodriguez2010framework}, \gobblesubs{have
also been developed}{exist}.

\gobblesubs{
The present work may be viewed in part as a generalization of this traditional
notion of robot simulation, in which certain elements of the simulation are
conducted physically, rather than virtually.
This direction is closely related to work on robot simulations, visualizations, and development
environments that alter aspects of the physical world using mixed or
augmented reality
techniques~\cite{amstutz02,stilman2005augmented,collett06,chen09}.  The
distinguishing feature of the present work is that modification of things
in the real world is done by and for the benefit of robots, not for 
human developers or operators, and not necessarily by additional technologies
(like visual projectors or displays).
% Sad to see that last sentence go...
}{
This work is partly a generalization of the traditional
notion of robot simulation, but with elements of the simulation 
conducted physically rather than virtually.
Closely related work includes endeavors that alter aspects of the physical
world using mixed or augmented reality
techniques~\cite{amstutz02,stilman2005augmented,collett06,chen09}.  
The
distinguishing here is that modifications of 
the world are made by robots and for robots, not 
human developers or operators, and not via additional display technologies.
}

\subsection{Simulation as theoretical concept}

\gobblesubs{
Relating two systems by the fact that one can simulate the other, for some
definition of simulation, is an oft-recurring theoretical standpoint.  Among
other instances, the symmetric notion, where two systems are each able to
match the other, yields the concept of bisimulation, which is an equivalence
relation.  The bisimilarity idea was identified independently in modeling
concurrent systems~\cite{Milner1980} and in modal logic~\cite{Park1981}; it
may also be understood via game theoretic definitions~\cite{stirling95local}.
}{
Relating systems by the fact that they can simulate each other, for some
definition of simulation, is
a recurring theoretical theme.
The symmetric notion, where two systems are each able to
match the other, yields the concept of bisimulation, which is an equivalence
relation.  Bisimilarity was identified independently in modeling
concurrent systems~\cite{Milner1980} and in modal logic~\cite{Park1981}; it
also has a game theoretic interpretation~\cite{stirling95local}.
}

\gobblesubs{
With specific regard to theory focused on robots, the invariants among
sensori-computation circuits of \citet{donald95information} are shown by
comparing two circuits systems under a suitably chosen definition of simulation
(that includes calibration information, for instance).
The construction of a lattice relating and identifying the relative power of
robots was shown by \citet{OKaLav08}, wherein they introduce a simulation concept
that connects sensors and actuators, and policies between, permitting, for instance,
a robot with a bump sensor to physically cast rays in the environment in order to
establish information comparable to a laser range finder. One observes that doing
so might lead one to reason about efficiency considerations, say in terms of
time or energy---ideas that are similar to the work explored herein.
Indeed, the dominance relation between robot systems introduced by O'Kane and
LaValle, though presented only for single robots, bears a strong parallels to
the notion of illusion we introduce here, particularly in the use of one system
to emulate certain properties of another, possibly with the aid of differences
in the systems' time scales.  Our illusions differ because they directly and
necessarily consider presence of multiple robots in the system, and because
they focus on a notion of perceptual equivalence for the robots participating
in the illusion, rather than on the overall scope of what can be inferred by
robots in each system under certain information maps.
}
{
Closer to home in robotics, 
invariants among sensori-computation circuits of \citet{donald95information},
and the dominance relation between robot systems introduced by \citet{OKaLav08},
bear parallels to the notion of illusion we introduce here, particularly in the
use of one system, or re-arrangements of the resources contained therein, to
emulate certain properties of another. In this paper, the emphasis on
perceptual equivalence for the robots participating in the illusion is fresh.
}

% Old Brockett papers?

\section{Preliminary definitions}

\subsection{Systems}
We wish to talk about relationships between pairs of systems of robots. First,
then, we need to define the notion of a system. Because henceforward we shall
consider systems consisting of possibly many robots, \gobble{and it'll be helpful to 
think in a roughly global way,} we jump directly into definitions that consider
(potentially) multiple robots.
\gobblesubs{Through all of the notation that follows, we
 use superscripts in parentheses to
denote robot indices.  Likewise subscripts denote time indices.
}{Superscripts in parentheses denote robot indices; subscripts are time indices.}

\begin{definition}\label{defn:sys}
  A \emph{deterministic multi-robot transition system} is a $7$-tuple
    $(n, X, U, f, Y, h, x_0)$,
  in which
  \begin{enumerate}
    \item $n$ is a positive integer identifying the number of robots,
    \item $X = X\rind{1} \times \cdots \times X\rindn$ denotes a state
    space, composed of individual state spaces for each robot,
    \item $U = U\rind{1} \times \cdots \times U\rindn$ denotes an action
    space, composed of individual action spaces for each robot,
    \item $f: X \times U \to X$ is a state transition function, defined in
    terms of transition functions $f\rind1,\ldots,f\rindn$ for each robot, so
    that\vspace*{-4pt}
    {\small
    \begin{align*} 
      f & \left( (x\rind1, \ldots, x\rindn), (u\rind1,\ldots,u\rindn)\right) \\[-2pt]
        & = \left( f\rind1(x, u\rind1),\ldots, f\rindn(x, u\rindn)  \right).
    \end{align*}\vspace*{-10pt}}
    \item $Y = Y\rind{1} \times \cdots \times Y\rindn$ denotes an observation
    space, composed of individual observation spaces\gobble{for each robot},
    \item $h: X \to Y$ is an observation function, defined in
    terms of observation functions $h\rind1,\ldots,h\rindn$ for each robot, so
    that
      $h \left( x \right)
        = \left( h\rind1(x), \ldots, h\rindn(x)  \right)
      $,
    \item $x_0 \in X$ is the system's initial state.

  \end{enumerate}
\end{definition}
\vspace*{-12pt}

\bigskip
\noindent 
Such a system evolves, in a series of discrete time steps, subject to the
following pair of equations:
\vspace*{-2pt}
\begin{align*} 
X\hspace*{11pt}  &\hspace*{22pt} X \hspace*{9pt} U \\[-8pt]
\Vin\hspace*{13pt}  &\hspace*{24pt} \Vin \hspace*{11pt}\Vin \\[-8pt]
 x_{k+1} &= f(x_{k}, u_{k}),\\
 y_{k} &= h(x_{k}).\\[-6pt]
\vin\hspace*{2pt}  &\hspace*{24pt}\vin\\[-6pt]
Y\hspace*{1pt}  &\hspace*{22pt}X
\end{align*} 
\vspace*{-12pt}

\begin{figure}
  \centering
  \input{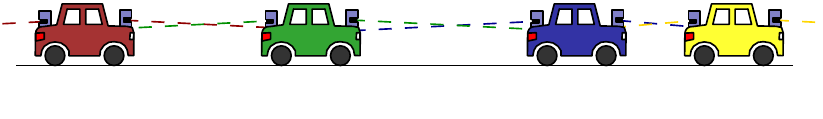tex_t}
  \vspace*{-4pt}
  \caption{An example of the sort of system in
  Example~\ref{ex:caravan}, with $n=4$.
  At each time step $k$, each robot is at some point $x\rindi_k$ along the roadway moving with velocity $u\rindi_k$, and measures the distances $y\rindi_k$ to the adjacent robots.
  }
  \label{fig:caravan}
  \vspace*{-12pt}
\end{figure}

\gobblesubs{\noindent A few simple examples, each of which we revisit later, can
illustrate the idea.}{\noindent A few simple examples, to be revisited later, 
illustrate the idea.}
\begin{example}\label{ex:caravan}
  Consider a caravan of $n$ autonomous vehicles\,---that is, robots---\,moving
  down a long single-lane roadway.
  Suppose each robot can control its own velocity, subject to some upper and
  lower bounds, and can also measure the distance to the other robots
  immediately in front of and immediately behind itself.  See
  Figure~\ref{fig:caravan}.
  We might describe this scenario as a deterministic multi-robot transition system
  \vspace*{-2pt}
    \begin{equation}
      S_{n,\vmin,\vmax} = (n, \R^n, [v_{\rm min}, v_{\rm max}]^n, f, \R\plus \times \R\plus, h, x_0), \label{eq:av}
  \vspace*{-2pt}
    \end{equation}
  for which we'll give the state transition function $f$ and observation $h$
  shortly.  Here elements of the state space $X=\R^n$ encode the position,
  along the one-dimensional roadway, of each of the $n$ robots.
  At each time step $k$, the action $u\rindi_k \in [\vmin, \vmax]$ of
  robot $i$ denotes the velocity of that robot at that time.  Thus, we may
  define $f(x, u) = x + u$.  We assume that $\vmin < \vmax$.
  Each observation $y\rindi_k \in \R\plus\times\R\plus$ is a pair of
  integers indicating the distance to the closest other robot, if any, in
  each direction:
  \vspace*{-4pt}
    \begin{align*}
      y\rindi_k \!=\! h\rindi(x_k) \!=\! \bigg(\!\!
        &\min \left(\! \left\{ |x\rindj_k - x\rindi_k| \middle| x\rindj_k < x\rindi_k \right\} \cup \{ \infty \} \right) ,\\[-4pt]
        &\min \left(\! \left\{ |x\rindj_k - x\rindi_k| \middle| x\rindj_k > x\rindi_k \right\} \cup \{ \infty \} \right)
      \!\!\bigg).
  \vspace*{-2pt}
    \end{align*}
  To refer to the individual measurements in a single observation, we use the
  symbols $b$ and $a$ for the distances \underline{b}ehind and
  \underline{a}head, so that $y\rindi_k = (b\rindi_k, a\rindi_k)$.
  Finally, the initial state $x_0$ is some known but arbitrary state.

  Notice that \eqref{eq:av} is, in fact, defining an infinite family
  of systems, parameterized by the number of vehicles in the system and the
  ranges of allowable velocities.

  This is, of course, a heavily idealized model of caravaning autonomous
  vehicles, crafted as an elementary illustration of Definition~\ref{defn:sys}.
  Richer models might, for example, expand $X$ to model multi-lane roadways or
  the robots' lateral positions within the lanes, enrich $U$ and $f$ to model
  the dynamics of some physical system more faithfully, or modify $Y$ and $h$
  to model, say, a \textsc{lidar} sensor with greater fidelity.
\end{example}

\begin{example}\label{ex:robotarium}
  Consider a system in which many small disk-shaped differential drive robots
  move in a shared, bounded, planar workspace, with each robot aware of the
  relative positions of the other robots within some small sensor range.
  Refer to Figure~\ref{fig:pillars}[left].
  One might realize this kind of system using, for example,
  Khepera~\cite{mondada1999development}, r-one~\cite{mclurkin2013using}, or
  GRITSbot~\cite{pickem2015gritsbot} robots.
  We can model such a system by choosing the number of robots $n$, the
  rectangular workspace $W \subseteq \R^2$, the maximum wheel velocity
  $\vmax$, and the sensor range $r$.  We then define
  \vspace*{-4pt}
  \begin{equation}
    S\disks = \left(n, X\disks, U\disks, f\disks, Y\disks, h\disks, x_0 \right),
  \vspace*{-4pt}
  \end{equation}
  in which
    the states in $X_{\rm disks} = (W \times S^1)^n$,
    the actions in $U_{\rm disks} = [-\vmax,\vmax]^2$ denote the left
    and right wheel velocities for each robots,
    the state transition function $f\disks$ encodes the well-known
    kinematics for differential drive robots,
    the observations in $Y\disks = \cup_{i=0}^{n-1} (\R^2)^n$ are lists
    of between $0$ and $n-1$ planar positions,
    the observation function $h\rindi\disks$ for each robot $i$ returns
    a list of the relative positions of any other robots within distance $r$ of robot $i$, and
    the initial state $x_0 \in X\disks$ is a known but arbitrary state.
\end{example}

\begin{example}\label{ex:pillars}
  Definition~\ref{defn:sys} is also suitable for describing single-robot
  systems as a particular case with $n=1$.
  For example, a velocity-controlled robot moving in a very large field of
  nearly-identical static obstacles, with a sensor to detect those obstacles
  when they are nearby, might be modelled as
  \vspace*{-4pt}
  \begin{equation}
    S\single = \left(1, X\single, U\single, f\single, Y\single, h\single, x_0 \right),
  \vspace*{-2pt}
  \end{equation}
  with
    $X\single = \R^2$, $U\single = [-\vmax, \vmax]$, and $f\single(x, u) = x+u$.
  The observation space $Y\single$ and $h\single$ may be defined to return the
  locations of the center points of each obstacle.  See
  Figure~\ref{fig:pillars}[right].
\end{example}

\begin{figure}
  \input{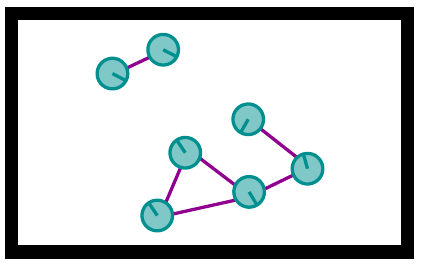tex_t}
  \hfill
  \input{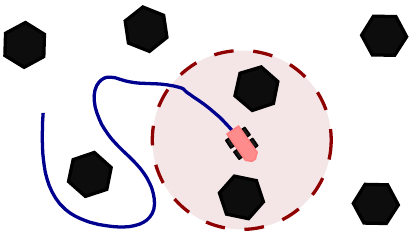tex_t}
  \caption{[left] A team of simple robots in a bounded environment, as in
  Example~\ref{ex:robotarium}.  [right] A single robot moving in a unbounded
  field of obstacles, as in Example~\ref{ex:pillars}.}
  \vspace*{-16pt}
  \label{fig:pillars}
\end{figure}

\vspace*{-4pt}
\subsection{Policies}
In the model, a robot operates by choosing actions to execute, a concept
detailed via a policy.  The essential\gobble{ and important} question in formalizing
policies is to determine what information is used by the robot in considering its
action.  Now, to define the policy concept, we adopt the style of LaValle's book~\cite{Lav06}.

We begin, first, with something 
simple that will turn out to be inadequate for our needs.
If robot $i$, at time step $k$, has sufficient information that it can determine
its state, i.e., it is a fully observable problem, then its policy $\pi\rindi$
might be defined as a function of that state:
\vspace*{-4pt}
\begin{align} 
X\rindi\hspace*{-5pt}  &\hspace*{26pt} U\rindi \nonumber\\[-6pt]
\Vin\hspace*{7pt}  &\hspace*{28pt} \nonumber\Vin\\[-4pt]
x\rindi_{k} & \xmapsto{\pi\rindi} u\rindi_{k}.\label{eq:simplest_policy}
\vspace*{-4pt}
\end{align} 
More likely, the robot will only have access to its history of actions and
observations to select its action
\vspace*{-4pt}
\begin{align} 
U\rindi\hspace*{18pt} U\rindi\hspace*{6pt} Y\rindi\hspace*{16pt} Y\rindi\hspace*{-3pt}  &\hspace*{26pt} U\rindi \nonumber\\[-6pt]
\Vin\hspace*{31pt} \Vin\hspace*{17pt} \Vin\hspace*{30pt} \Vin\hspace*{7pt}  &\hspace*{28pt} \Vin\nonumber\\[-4pt]
u\rindi_{0}, \dots,u\rindi_{k},y\rindi_{0}, \dots,y\rindi_{k} & \xmapsto{\pi\rindi} u\rindi_{k}. \label{eq:own_policy}
\end{align} 
In what follows, one robot system will seek to present some
view of the world to match a description as will be seen by some other, secondary system.
This primary system must know some aspects of that other system to fool it effectively.
That is, the primary system must be aware of the `fourth wall' and know some of the
expectations and qualities on the other side of it.
Throughout, we use a notational convention: 
we distinguish the primary system (initially best thought of as the physical
system) by placing a hat over its variables;
all variables for the secondary
are bare. %
Now, returning to our formalization of the policy concept, we must
generalize the notation so far in order for it to present information about the
primary system and a secondary one, partitioned like such
\begin{align}
\hat{U}\rindi\hspace*{18pt} \hat{U}\rindi\hspace*{6pt} \hat{Y}\rindi\hspace*{18pt} \hat{Y}\rindi\hspace*{3pt} X\hspace*{22pt} X\hspace*{2pt}  &\hspace*{24pt} \hat{U}\rindi\nonumber\\[-6pt]
\Vin\hspace*{31pt} \Vin\hspace*{17pt} \Vin\hspace*{30pt} \Vin\hspace*{15pt} \Vin\hspace*{26pt} \Vin\hspace*{5pt}  &\hspace*{28pt} \Vin\nonumber\\[-4pt]
\hat{u}\rindi_{0}, \dots,\hat{u}\rindi_{k},\hat{y}\rindi_{0}, \dots,\hat{y}\rindi_{k} ,x_{0}, \dots,x_{\ell} & \xmapsto{\hat{\pi}\rindi} \hat{u}\rindi_{k}.\label{eq:full_policy}\\[-10pt]
\underset{\substack{i\text{'s action} \\[1.5pt] \text{history}}}{\underbrace{\hspace*{50pt}}} \hspace*{8pt} \underset{\substack{i\text{'s observation} \\[1.5pt] \text{history}}}{\underbrace{\hspace*{48pt}}} \hspace*{8pt} \underset{\substack{\text{Whole other} \\[1.25pt] \text{system's state} \\ \text{history}}}{\underbrace{\hspace*{42pt}}} \hspace*{0pt}& \nonumber
\end{align} 

\noindent Note that the hatted variables in the domain are labelled from $0$ to $k$,
while the naked variables extend to $\ell$.  This models the fact that the
primary and second systems may operate at different time scales.  Immediately,
one sees other variations that are possible, such as instances when
$\hat{\pi}\rindi$ uses only the last element ($x_{\ell}$) of the secondary
system's state. Or, when the primary robots may communicate, the
$(i)$ superscripts may be dropped when we consider the multi-robot system
globally.  For simplicity, we restrict our attention in this paper only to the
basic case.  In
what follows, the term \emph{robot policy} refers to a function of the form
in~\eqref{eq:full_policy}.

\subsection{Illusions}
\begin{definition}\label{def:illusion}
    \label{defn:illusion}
  For deterministic multi-robot transition systems 
    $S = (n, X, U, f, Y, h, x_0)$,
  and
    $\hat{S} = (\hat{n}, \hat{X}, \hat{U}, \hat{f}, \hat{Y}, \hat{h}, \hat{x}_0)$,
  and 
  %integers $t \ge 0$ and  $0 \le m \le n$, <see bumper multiplex example, below>
  integer $0 < m \le n$, we say that $\hat{S}$ is an
  \emph{$m$-illusion} of $S$ if  there exist
  \begin{enumerate}
    \item[\footnotesize(i)] robot policies $\hat\pi\rind1,\ldots,\hat\pi\rind{\hat{n}}$ in $\hat{S}$, 
    \item[\footnotesize(ii)] a strictly increasing function
      $ z: \Z\plus \to \Z\plus$,
    and
    \item [\footnotesize(iii)] an infinite series of functions
      $ \rho_k: \Z_{m} \to \Z_{\hat{n}} $,
  \end{enumerate}
  for any robot policies $\pi\rind1,\ldots,\pi\rindn$ in $S$,
  such that for all $k \ge 0$ and all $1 \le i \le m$, we have
      \[ h\rindi(x_k) = \hat{h}\rind{\rho_k(i)}\left( \hat{x}_{z(k)}\right). \label{eq:faithful} \tag{$\star$} \]    
  Further, if $\hat{S}$ is an $m$-illusion of $S$, then a tuple of robot
  policies, mapping functions, and a time scaling function
    $(\hat\pi, (\rho_1, \rho_2, \ldots), z)$
  that ratifies the definition of illusion is called a \emph{witness} to that
  illusion.
\end{definition}

\noindent The preceding definition \gobblesubs{is perhaps sufficiently complex to warrant some dissection.}{warrants some dissection.}
\begin{tightenumerate}
  \item We understand the system $S$ to be the secondary one, i.e., the one
  that we intend to emulate.  The system $\hat{S}$ is the physical system whose
  execution will be orchestrated to appear, in the perception of some of its
  robots, to operate in the same manner as $S$.%
  \footnote{Occasionally human illusionists opt for for a
  certain type of stereotypical headwear (\raisebox{-0.5mm}{\includegraphics[height=1em]{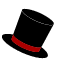}}).
  Likewise, our convention uses notation with hats
  (\raisebox{-0.3em}{$\:\hat{\hspace{1ex}}\:$}) to refer to systems whose robots
  are performing an illusion.  The parallel is unintentional but perhaps
  nonetheless a useful aid to understanding.} 

  \item The positive integer parameter $m$ is the number of robots in $S$ that
  are recipients of the illusion, whom we dub the participant robots.
  To simplify the notation, we will assume without loss of generality that
  the first $m$ robots in $S$, according to their indices, are the
  participants.
  (One might also expect for $m \le \bar{n}$ always, as it seems that 
  the number of participant robots cannot exceed the number of robots in the
  system; in fact, this need not be so, \foreign{cf.} Example~\ref{ex:loads_of_bumper}.)
  
  \item The $\hat{S}$ robot policies $\hat\pi\rindi$ described in
  condition~(i) govern the movements of the robots in that system.
  
  \item The function $z$ from condition (ii) establishes the relationship between the
  time scales of the two systems, so that $z(k)$ defines the physical time step
  in $\hat{S}$ corresponding to time step $k$ in $S$.

  \item The functions $\rho_k$ from condition (iii) indicate, for each time
  step $k$ of the execution in $S$, which robots of $\hat{S}$ play the roles of
  each of the participant robots in $S$.

\end{tightenumerate}
\gobblesubs{Finally, p}{P}ulling these elements together, the constraint marked~\eqref{eq:faithful}
requires, at each time step in $S$, that every participant robot is mapped, via
the $\rho$ function for that time step, to a robot in $\hat{S}$ that
experiences the same observation in that system as the mapped robot should
experience in $S$.
A few examples follow.

\begin{example}\label{ex:caravan2}
  Recall the autonomous caravan systems introduced in Example~\ref{ex:caravan}.
  For any such system $S = S_{n, \vmin, \vmin}$, we can form a $1$-illusion from any
  system of the form $\hat{S} = S_{3, \vhatmin, \vhatmax}$.
  This holds regardless of the number $n$ of robots in $S$ and of the
  range of actions $[\vhatmin, \vhatmax]$ available to each robot in $\hat{S}$.

  One way to construct such an illusion is to select a policy $\hat{\pi}$ in which robot~1 moves at a constant speed
  $(\vhatmin+\vhatmax)/2$.  The other two robots, knowing the desired
  observation $y\rind1_k = (a\rind1_k, b\rind1_k)$ from $S$, position
  themselves on opposite sides of robot~1, moving as fast as possible at each
  stage in $\hat{S}$ toward positions where 
    $\hat{x}_{\hat{k}}\rind1 - \hat{x}_{\hat{k}}\rind2 = b\rind1_k$
  and 
    $\hat{x}_{\hat{k}}\rind3 = \hat{x}_{\hat{k}}\rind1 = a\rind1_k$.
  To satisfy the remaining conditions of Definition~\ref{defn:illusion}, define
  $z$ to return the time when robots~2 and~3 in $\hat{S}$ have reached their target
  positions, and the sequence of mapping functions 
  $\rho_k: \{ 1 \} \to \{ 1, 2, 3 \}$ as a constant series of functions, 
  under which $1 \xmapsto{\rho_k} 1$
  for all $k$.  See Figure~\ref{fig:caravan2}.
  \vspace*{-8pt}
\end{example}

\begin{figure}
  \centering
  \input{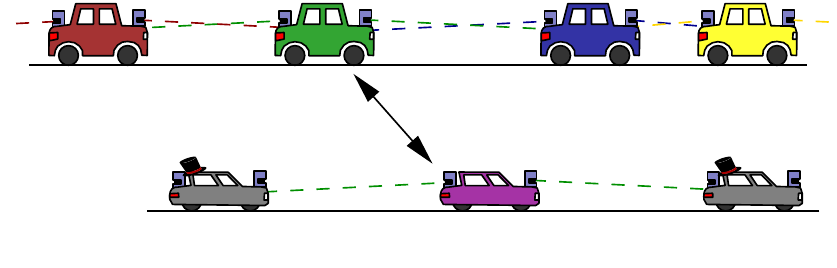tex_t}
  \vspace*{-6pt}
  \caption{An illustration of Example~\ref{ex:caravan2}.  A system of three
  vehicles reproduces the observations expected in a system with potentially
  many more robots.\label{fig:caravan2}}
  \vspace*{-14pt}
\end{figure}

\begin{example}\label{ex:robotarium-ill}
  Recall the system $S\single$ introduced in Example~\ref{ex:pillars}.
  Suppose there exists an upper bound $m$ on the number of obstacles visible
  from\,---that is, within distance $r$ of---\,any position that the robot
  might reach.
  Then $S\disks$, from Example~\ref{ex:robotarium}, is a $1$-illusion for
  $S\single$, provided that it has at least $m+1$ robots, its workspace $W$
  is large enough to contain a circle of radius $r$, and the sensing range in
  $S\disks$ is no smaller than the sensing range in $S\single$.

  One way to achieve this illusion is to select robot~1 in $S\disks$ to act
  as the recipient of the observations as required by \eqref{eq:faithful}.
  This robot remains motionless at the center of the physical workspace $W$.
  At each stage $k$ in $S\single$, the desired observation $y\rind1_k$ is a
  list of positions at which robot~1 should perceive obstacles. 
  We choose a policy $\hat\pi$ that directs the some of the remaining
  $\hat{n}-1$ robots to those positions relative to robot 1, and directs the
  remaining robots to positions beyond its sensing range.  See
  Figure~\ref{fig:rpi}.  Many different policies, with varying degrees of time
  efficiency, can achieve this.
\end{example}

\begin{figure}
  \centering
  \input{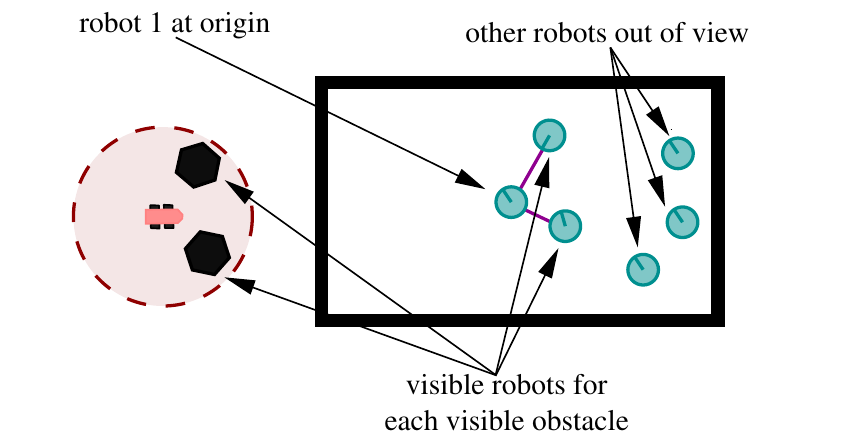tex_t}
  \vspace*{-4pt}
  \caption{A $1$-illusion of $S\single$ using $S\disks$.
  \label{fig:rpi}}
  \vspace*{-16pt}
\end{figure}

% Special cases: create an illusion of robot faster than it is.
% - Note that the example just above has this property, but perhaps not how
%   we'd intended.

\medskip
Next, we consider the execution time in the primary system as a resource
cost in which we are interested.

\begin{definition}
\label{defn:slowdown}
  If $\hat{S}$ is an $m$-illusion of $S$ with witness 
    $(\hat\pi, (\rho_1, \rho_2, \ldots), z)$,
  then the illusion is an $(m,\tau)$-illusion if the sequence
  \vspace*{-6pt}
    $$z(2)-z(1), z(3)-z(2), z(4)-z(3), \ldots  \vspace*{-4pt}$$
  is bounded above by $\tau$. 
  The constant $\tau$, which we can take to be an integer owing to the
  definition of $z$, is called the \emph{slowdown} of the illusion.
\end{definition}
In broad terms, we may then consider $\frac{1}{\tau}$, the inverse slowdown,
to be time efficiency of an illusion.

\begin{example}\label{ex:caravan3}
  Recall Example~\ref{ex:caravan2}.  That illusion has slowdown
    $\left\lceil 2(\vmax - \vmin)/(\vhatmax - \vhatmin) \right\rceil$.
\end{example}

\section{Basic Properties of Illusions}
Definition~\ref{def:illusion} provides a foundation for understanding the
notion of one system presenting an illusion of another.  Next, we present some
results that follow from that definition.
As an initial sanity check, we show that a system does indeed present a
faithful and efficient illusion of itself.

\begin{theorem}[identity]
A deterministic multi-robot transition system
    $S = (n, X, U, f, Y, h, x_0)$ is an
    ($n$,$1$)-illusion of $S$.
\end{theorem}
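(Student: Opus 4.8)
The plan is to exhibit the trivial witness in which the primary system is literally a copy of the secondary one. Concretely, take $\hat{S} = S$ (so $\hat{n} = n$, $\hat{f} = f$, $\hat{h} = h$, and $\hat{x}_0 = x_0$), set the time scaling to the identity $z(k) = k$, take every role-assignment map $\rho_k \colon \Z_n \to \Z_n$ to be the identity, and choose the primary policies $\hat\pi\rind1, \ldots, \hat\pi\rindn$ to replicate the given secondary policies $\pi\rind1, \ldots, \pi\rindn$. Conditions (ii) and (iii) of Definition~\ref{defn:illusion} are then met by construction: $z$ is strictly increasing, and since it is integer valued every consecutive difference $z(k+1) - z(k)$ equals $1$, so by Definition~\ref{defn:slowdown} the slowdown is $\tau = 1$, exactly what an $(n,1)$-illusion requires. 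Everything else reduces to verifying the faithfulness condition \eqref{eq:faithful}, which under $z = \mathrm{id}$ and $\rho_k = \mathrm{id}$ reads $h\rindi(x_k) = \hat{h}\rindi(\hat{x}_k)$.

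Since $\hat{h} = h$, this holds as soon as I show that the two executions coincide, i.e.\ $\hat{x}_k = x_k$ for all $k$. I would prove this by induction on $k$. The base case is $\hat{x}_0 = x_0$, true by construction. For the inductive step, suppose $\hat{x}_j = x_j$ for all $j \le k$. Because $\hat{h} = h$, the observation histories agree, $\hat{y}\rindi_j = h\rindi(\hat{x}_j) = h\rindi(x_j) = y\rindi_j$; because the policies were chosen as copies of one another and are fed identical histories, the selected actions agree, $\hat{u}\rindi_k = u\rindi_k$; and because $\hat{f} = f$, the common state is carried forward identically, $\hat{x}_{k+1} = \hat{f}(\hat{x}_k, \hat{u}_k) = f(x_k, u_k) = x_{k+1}$. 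This closes the induction and establishes \eqref{eq:faithful}.

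The one place that needs care---and the step I expect to be the main obstacle---is making the phrase ``replicate the secondary policies'' precise, because a robot policy in the sense of \eqref{eq:full_policy} is permitted to read the \emph{other} system's state history $x_0, \ldots, x_\ell$ in addition to its own action and observation history. Naively setting $\hat\pi\rindi$ equal to $\pi\rindi$ risks a circular coupling between the two executions. I would resolve this by defining $\hat\pi\rindi$ to discard the secondary-state argument and to apply exactly the decision rule of $\pi\rindi$ to the primary system's own hatted histories; this makes $\hat{S}$ run autonomously as a faithful replica of $S$ and keeps the induction above well founded. Alternatively, and more robustly, one may retain the secondary-state argument: since $S$ is deterministic, access to $x_0, \ldots, x_k$ lets $\hat\pi\rindi$ re-simulate the entire run of $S$ and read off the exact action $u\rindi_k$ directly, so that no inversion of $f$ is ever required. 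With the construction pinned down in either of these ways, the argument is complete.
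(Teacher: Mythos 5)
Your proposal is correct and uses exactly the same witness as the paper's own (one-line) proof: identity $z$, identity $\rho_k$, and primary policies copied from the secondary ones, so that \eqref{eq:faithful} holds with slowdown $1$. The induction establishing $\hat{x}_k = x_k$ and your care about how a policy of the form \eqref{eq:full_policy} ``copies'' one that ignores the other system's state are details the paper leaves implicit, but they do not change the approach.
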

\begin{proof}
We observe that, if $z$ and $\rho_k$ are taken as 
identity functions, then \eqref{eq:faithful} holds when
$\hat\pi\rind{i} = \pi\rind{i}$. 
\end{proof}
\smallskip

Considering the preceding theorem, one might wonder whether a stronger
statement ought to be made, to the effect that every $S$ can provide an
$m$-illusion of itself for any $m < n$.
% It cannot.
That statement is absent because it is false.  Supposing $m + p = n$ with $p >
0$, then there are $p$ robots that may show up under $h$.  Additional
properties of $h$ are needed to ensure that the $p$ robots can be made
invisible.

With additional assumptions on the dynamics of $S$, i.e., if the system
can be made to either loiter or affect state changes more slowly, then
an ($n$,$j$)-illusion with $j>1$ is also possible.
\medskip

\gobblesubs{A rather more interesting consideration is}{Rather more
interesting is} the nesting of systems: one system presenting an illusion to
another, that it itself presenting an illusion to a third.

\begin{theorem}[composition]
\label{thm:comp}
  If $\doublehat{\mbox{$S$}} = (\doublehat{n}, \doublehat{\mbox{$X$}}, \doublehat{\mbox{$U$}}, \doublehat{\mbox{$f$}}, \doublehat{Y}, \doublehat{\mbox{$h$}}, \doublehat{\mbox{$x$}}_0)$,
  is an $(\hat{n},\hat{\tau})$-illusion of 
    $\hat{S} = (\hat{n}, \hat{X}, \hat{U}, \hat{f}, \hat{Y}, \hat{h}, \hat{x}_0)$, and
$\hat{S}$
  is an $(m,\tau)$-illusion of 
      $S = (n, X, U, f, Y, h, x_0)$,
  then
  $\doublehat{\mbox{$S$}}$ is an $(m,(\tau\cdot\hat{\tau}))$-illusion of $S$.
\end{theorem}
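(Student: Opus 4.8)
The plan is to assemble a witness for the composite illusion by composing the two given witnesses component-by-component. Write $(\doublehat{\pi}, (\hat{\rho}_1, \hat{\rho}_2, \ldots), \hat{z})$ for the witness to $\doublehat{S}$ being an $(\hat{n}, \hat{\tau})$-illusion of $\hat{S}$, where $\hat{z} : \Z\plus \to \Z\plus$ and $\hat{\rho}_{\hat{k}} : \Z_{\hat{n}} \to \Z_{\doublehat{n}}$, and $(\hat{\pi}, (\rho_1, \rho_2, \ldots), z)$ for the witness to $\hat{S}$ being an $(m,\tau)$-illusion of $S$, where $z : \Z\plus \to \Z\plus$ and $\rho_k : \Z_m \to \Z_{\hat{n}}$. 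As the composite time-scaling I would take $Z = \hat{z} \circ z$ and as the composite role-assignments $P_k = \hat{\rho}_{z(k)} \circ \rho_k : \Z_m \to \Z_{\doublehat{n}}$. A composition of strictly increasing functions is strictly increasing, so $Z$ is a legitimate time-scaling, and $P_k$ has the required domain and codomain; note that $Z$ and $P_k$ are built only from the fixed data of the two witnesses and so do not depend on the secondary system's policies.

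First I would fix an arbitrary tuple of policies $\pi\rind1, \ldots, \pi\rindn$ in $S$ and let $x_0, x_1, \ldots$ be the trajectory they induce. The inner witness supplies the $\hat{S}$-trajectory $\hat{x}$, driven by $\hat{\pi}$, and guarantees that $h\rindi(x_k) = \hat{h}\rind{\rho_k(i)}(\hat{x}_{z(k)})$ for all $k \ge 0$ and $1 \le i \le m$. The key structural point is that the outer illusion holds \emph{for any} policies in its secondary system $\hat{S}$; I would instantiate those policies as exactly the $\hat{\pi}$ that drive $\hat{S}$ above, which leaves the fixed witness $\hat{z}, \hat{\rho}, \doublehat{\pi}$ unchanged. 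Since $\doublehat{S}$ is an $(\hat{n}, \hat{\tau})$-illusion, \emph{every} one of the $\hat{n}$ robots of $\hat{S}$ is a participant, so the outer witness yields the $\doublehat{S}$-trajectory $\doublehat{x}$ with $\hat{h}\rind{j}(\hat{x}_{\hat{k}}) = \doublehat{h}\rind{\hat{\rho}_{\hat{k}}(j)}(\doublehat{x}_{\hat{z}(\hat{k})})$ for all $\hat{k} \ge 0$ and all $1 \le j \le \hat{n}$. Full participation is precisely what makes the next step legal: the inner $\rho_k$ may send a participant to any index in $\hat{S}$, and all of those observations are reproduced. Evaluating the outer identity at $\hat{k} = z(k)$ and $j = \rho_k(i)$ and substituting into the inner one chains them into $h\rindi(x_k) = \doublehat{h}\rind{P_k(i)}(\doublehat{x}_{Z(k)})$, which is exactly the faithfulness condition~\eqref{eq:faithful} for the composite.

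For the slowdown bound I would argue by telescoping the first differences of $Z$. Because the inner witness has slowdown $\tau$, consecutive values of $z$ differ by at most $\tau$, and because the outer witness has slowdown $\hat{\tau}$, consecutive values of $\hat{z}$ differ by at most $\hat{\tau}$. Writing $Z(k+1) - Z(k) = \hat{z}(z(k+1)) - \hat{z}(z(k))$ as a sum of the $z(k+1) - z(k) \le \tau$ consecutive first differences of $\hat{z}$ lying between $z(k)$ and $z(k+1)$, each at most $\hat{\tau}$, gives $Z(k+1) - Z(k) \le \tau \hat{\tau}$. Hence the composite is an $(m, \tau\hat{\tau})$-illusion, as claimed.

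The step I expect to be the main obstacle is the bookkeeping around the policies and the information each is permitted to use, rather than any of the arithmetic. A witness policy for $\doublehat{S}$ regarded as an illusion of $S$ must, by the form~\eqref{eq:full_policy}, be a function of the $S$-state history $x_0, \ldots, x_\ell$, whereas the given $\doublehat{\pi}$ are written as functions of the $\hat{S}$-state history. I would resolve this by noting that, once $\hat{\pi}$ and $\hat{x}_0$ are fixed, the trajectory $\hat{x}$ is a deterministic and causal function of $x$, obtained by iterating $\hat{h}$, $\hat{\pi}$, and $\hat{f}$; each $\doublehat{S}$ robot can therefore reconstruct internally the prefix of $\hat{x}$ that it needs from its access to $x_0, \ldots, x_\ell$ and then apply $\doublehat{\pi}$. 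Repackaging $\doublehat{\pi}$ this way produces policies of the admissible form, and the remaining care is to confirm that the required prefix of $\hat{x}$ is always available at the relevant physical step, which follows from causality of the reconstruction together with the monotonicity of $Z$.
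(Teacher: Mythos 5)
Your proof takes essentially the same route as the paper's: compose the two witnesses component-wise (policies $\doublehat{\pi}$ instantiated against $\hat{\pi}$ in particular, $\doublehat{z} = \hat{z} \comp z$, and $\doublehat{\rho}_k = \hat{\rho}_{z(k)} \comp \rho_k$), chain the two faithfulness conditions using the fact that the outer illusion covers all $\hat{n}$ robots of $\hat{S}$, and telescope the first differences of $\hat{z}$ to obtain the $\tau\cdot\hat{\tau}$ slowdown bound. Your closing paragraph, repackaging $\doublehat{\pi}$ as admissible policies over the $S$-state history by deterministically reconstructing the $\hat{S}$-trajectory, addresses a bookkeeping subtlety that the paper's proof passes over silently, so your write-up is, if anything, the more careful of the two.
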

\ifdefined\arxiv
  \begin{proof}
  Assume $(\hat\pi, (\rho_1, \rho_2, \ldots), z)$ is a witness for 
      $\hat{S}$'s illusion for $S$, and 
    $(\doublehat\pi, (\hat{\rho}_1, \hat{\rho}_2, \ldots), \hat{z})$ is a 
      witness for 
      $\doublehat{\mbox{$S$}}$'s  
  illusion for $\hat{S}$.
  To show $\doublehat{\mbox{$S$}}$ to be an $m$-illusion of $S$:
    \begin{enumerate}
      \item[\footnotesize(i)] take policies $\doublehat\pi$ in $\doublehat{\mbox{$S$}}$ because, since they must suffice for any 
  policies in $\hat{S}$, they must suffice for $\hat\pi$ in particular;
      \item[\footnotesize(ii)] take the function $\doublehat{\mbox{$z$}} = \hat{z} \comp z$
          is an increasing function, from $\Z\plus$ to $\Z\plus$, being the composition of two such functions;
      and
      \item [\footnotesize(iii)] the infinite series of functions
        $ \doublehat{\mbox{$\rho$}}_k: \Z_{m} \to \Z_{\footnotesize\mbox{$\doublehat{n}$}}$, with % I'm sorry to have to write it that way, but doublehat, despite being advertised as such, does not seem to be 'robust' -- if I understand what that term actually means.
          $\doublehat{\mbox{$\rho$}}_k = \hat{\rho}_{z(k)}\comp \rho_k$. 
    \end{enumerate}
      The definition of $\doublehat{\mbox{$z$}}$ means that, for all $k$,
  $\doublehat{\mbox{$z$}}(k+1) -\doublehat{\mbox{$z$}}(k) = 
  \hat{z}(z(k+1)) - \hat{z}(z(k)) = 
  \hat{z}(z(k) + \tau - \Delta_k) - \hat{z}(z(k))$, for some $\Delta_k \geq 0$.
  Since $\hat{z}(k)$ is increasing, 
  $\hat{z}(z(k) + \tau - \Delta_k) - \hat{z}(z(k)) \leq
  \hat{z}(z(k) + \tau) - \hat{z}(z(k)).$
  But, since $\tau \in \Z\plus$, telescope the series:
  $\doublehat{\mbox{$z$}}(k+1) -\doublehat{\mbox{$z$}}(k) \leq
  \hat{z}(z(k) + \tau) - \hat{z}(z(k)) = 
  [\hat{z}(z(k) + \tau) 
  - \hat{z}(z(k) + \tau-1)]
  + [\hat{z}(z(k) + \tau-1)  
  - \hat{z}(z(k) + \tau-2) ]
  %+ [\hat{z}(z(k) + \tau-2).
  +\cdots 
  %- \hat{z}(z(k) + 1)] 
  + [\hat{z}(z(k) + 1) - \hat{z}(z(k))] \leq \tau' + \cdots + \tau' = \tau \cdot \tau'$. 
  \end{proof}
  % Marginalia written in longhand to help me make sure I got this right:
  %
  %in S at time k, then we are in S' at time z(k) and are in S'' at time z'(z(k))
  %        robot i \in [1...m]  in S should be \rho_k(i) \in [1...n'] in S'
  %        robot j \in [1...m']  in S' should be \rho'_{z(k)}(j) \in [1...n''] in S''
  %
  %
  % so robot i \in [1..m] \rho'_{z(k)}(\rho_k(i))
  %
  % rho_k : [1...m] -> [1...n']
  % rho'_k : [1...n'] -> [1...n'']
  %
  %in S at time k, then we are in S' at time z(k)
  %        robot i in S should be \rho_k(i) in S'
  %
  %in S' at time l, then we are in S'' at time z'(l)
  %        robot i in S' should be \rho'_l(i) in S'
  %
  %in S at time k, then we are in S' at time z(k), and at time z'(z(k)) in S''
  %        robot i in S should be \rho_k(i) in S'
  %        robot \rho_k(i) in S' should \rho'_{z(k)}(\rho_k(i))
  \bigskip
\else
  \begin{proofdeferred}
    Omitted due to space limitations.  See~\cite{this_arxiv}.
  \end{proofdeferred}
\fi

Note that, in (iii), function composition requires that 
$\doublehat{\mbox{$S$}}$ be an $\hat{n}$-illusion of $\hat{S}$ in order for \gobblesubs{the domain
and ranges to match up}{the types to agree}. If $\doublehat{\mbox{$S$}}$ were only an $\hat{m}$-illusion of
$\hat{S}$ with $\hat{m} < \hat{n}$, then
the $\hat{n}-\hat{m}$ extra robots are needed to create an illusion for $S$.
This arises because we do not talk of some subset of robots in one system
sufficing to provide an illusion of another system, since all the primary
robots need to participate to ensure the illusion succeeds, even
if participating constitutes moving to ensure they're 
unobserved, ruining the illusion otherwise\gobble{ by giving the game away}.
\gobble{Again, this is the
requirement that some robots be invisible again.}

% Symmetric illusiosn. Can you run S -> S' and S' -> S
%
% Down the rabbit hole: Can you run S -> S' -> S?
%
% I was going to say something about loops of systems simulating themselves.
% Because we don't split the systems into parts, we can only really do this
% is all the systems illusion the whole other system.
%
% S can illusion all of S, illusioning all of S. The illusion needs to hold
% for the particular choice of \pi, not all of them, for the loop to be closed,
% I think. You couldn't have different \pis for the different Ss.
%
% Imagine -S to be S with a universal sign change convention. Then we have
% different things when we have an even number vs. an odd number of
% self-simulations.
%
% Abandoning this for the current paper...

Illusions hold up to the set of observations made in the secondary system. 
One might expect that $Y \subseteq \hat{Y}$ but, in fact, $Y$ may be
larger or smaller, though the pair cannot be disjoint. It is not the range 
but the image which matters:
\begin{definition}\label{defn:percepti_foot}
The \emph{perceptual occurrence} of deterministic system $S = (n, X, U, f, Y, h, x_0)$,
is
%$$\left\{ y \in Y \mid y = h(x_k), \exists x_k  \right\}$$
the subset of $Y$, denoted $\seen{Y}$, that is produced under $h$ via 
states reachable by some robot policies $\pi\rind1,\ldots,\pi\rindn$.
\end{definition}

In Definition~\ref{defn:illusion}, requirement \eqref{eq:faithful} implies that
if $(\hat{n}, \hat{X}, \hat{U}, \hat{f}, \hat{Y}, \hat{h}, \hat{x}_0)$ is an
illusion of $(n, X, U, f, Y, h, x_0)$, then
% $Y \supseteq \seen{Y} \subseteq \seen{\hat{Y}} \subseteq \hat{Y}$.
$\seen{Y} \subseteq \seen{\hat{Y}}$.

\gobblesubs{In light of this,}{Now} we might inquire as to the implications for illusions under
alteration of the robots' sensors. We model potential degradation, or
preimage coarsening, of sensors via a function in the observation space, where
non-injective transformations will conflate things that were distinguishable
formerly.

\begin{theorem}[coarser observations]
\label{thm:coarser_obs}
  If $(\hat{n}, \hat{X}, \hat{U}, \hat{f}, \hat{Y}, \hat{h}, \hat{x}_0)$ 
  is an illusion of $(n, X, U, f, Y, h, x_0)$,
  then, for any function
  $\kappa: Y \cup \hat{Y} \to Z$, 
  we have that
  $(\hat{n}, \hat{X}, \hat{U}, \hat{f}, Z, \kappa\comp\hat{h}, \hat{x}_0)$ 
  is an illusion of $(n, X, U, f, Z, \kappa\comp h, x_0)$.\footnotemark
\end{theorem}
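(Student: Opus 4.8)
The plan is to reuse the witness for the original illusion almost verbatim, changing only how the robots' policies consume their observations. Write the given witness for ``$\hat{S}$ is an illusion of $S$'' as $(\hat\pi, (\rho_1, \rho_2, \ldots), z)$. I would keep the time-scaling $z$ and the whole series of mapping functions $\rho_k$ unchanged, since both have types that depend only on $m$, $\hat{n}$, and $\Z\plus$, none of which is touched by $\kappa$. The heart of the matter is that passing from $S$ to $S'=(n,X,U,f,Z,\kappa\comp h,x_0)$ and from $\hat S$ to $\hat S'=(\hat n,\hat X,\hat U,\hat f,Z,\kappa\comp\hat h,\hat x_0)$ leaves the state space, action space, transition function, and initial state completely intact; only the observation maps are post-composed with $\kappa$. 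Consequently a fixed tuple of policies drives exactly the same state trajectory in the primed system as in the unprimed one, and I will lean on this repeatedly.

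First I would dispatch faithfulness. For any run, the original witness guarantees $h\rindi(x_k) = \hat h\rind{\rho_k(i)}(\hat x_{z(k)})$ for all $k\ge 0$ and $1 \le i \le m$. Applying $\kappa$ to both sides gives $\kappa\!\left(h\rindi(x_k)\right) = \kappa\!\left(\hat h\rind{\rho_k(i)}(\hat x_{z(k)})\right)$, which is precisely $(\kappa\comp h)\rindi(x_k) = (\kappa\comp\hat h)\rind{\rho_k(i)}(\hat x_{z(k)})$, the requirement $(\star)$ for the primed pair. So the constraint transfers for free, once the trajectories are shown to agree.

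Next I would address the universal quantifier over the emulated system's policies, which is the direction that goes through smoothly. Given arbitrary policies $\pi'\rind1,\ldots,\pi'\rindn$ in $S'$ (which read coarsened observations in $Z$), I would lift each to a policy $\pi\rindi$ in $S$ by precomposing $\kappa$ on its observation arguments, i.e.\ letting $\pi\rindi$ feed $\kappa(y\rindi_0),\ldots,\kappa(y\rindi_k)$ into $\pi'\rindi$. Because $f$, $U$, and $x_0$ are shared, an easy induction on $k$ shows the run of $\pi$ in $S$ and the run of $\pi'$ in $S'$ produce identical action and state sequences, so the original witness, built to tolerate the lifted policies $\pi$, still controls the primary system correctly.

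The main obstacle is the \emph{reverse} information flow on the primary side: the witness policies $\hat\pi$ may genuinely consult fine observations in $\hat Y$, whereas a legitimate policy in $\hat S'$ may only see the coarsened versions $\kappa(\hat y)$, so I cannot simply declare $\hat\pi$ to be a policy of $\hat S'$. The way I would close this gap exploits determinism: the primary policy form~(\ref{eq:full_policy}) grants each robot the secondary system's entire state history $x_0,\ldots,x_\ell$, and the state spaces and dynamics of $S$ and $S'$ coincide, so this input is identical in both settings (by the lifting of the previous paragraph). Since the whole primary execution is a deterministic function of $\hat x_0$ and that secondary history under the fixed, known tuple $\hat\pi$, I would define $\hat\pi'\rindi$ to discard its own coarsened observation stream and instead internally replay the joint primary dynamics from $\hat x_0$ and $x_0,\ldots,x_\ell$ up through its current step, recovering the fine observation history $\hat h\rindi(\hat x_0),\hat h\rindi(\hat x_1),\ldots$ that $\hat\pi\rindi$ would have seen, and then returning $\hat\pi\rindi$'s action. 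This reconstructed policy is a bona fide policy of $\hat S'$, produces the same trajectory $\hat x_0,\hat x_1,\ldots$, and therefore, together with the unchanged $z$ and $\rho_k$ and the faithfulness transfer above, witnesses that $\hat S'$ is an illusion of $S'$ with the same $m$ and slowdown as the original.
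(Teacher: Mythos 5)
Your proposal is correct, and its central step is exactly the paper's: the paper's entire proof consists of the observation that the original witness $(\hat\pi,(\rho_1,\rho_2,\ldots),z)$ ratifies the new illusion because $h\rindi(x_k)=\hat{h}\rind{\rho_k(i)}\left(\hat{x}_{z(k)}\right)$ implies $\kappa\comp h\rindi(x_k)=\kappa\comp\hat{h}\rind{\rho_k(i)}\left(\hat{x}_{z(k)}\right)$, with the left equality needed only over $\seen{Y}$. Where you go beyond the paper is in taking the policy types seriously. The paper reuses $\hat\pi$ verbatim, but, as you note, $\hat\pi\rindi$ consumes observation histories drawn from $\hat{Y}$, whereas a legitimate policy of $(\hat{n},\hat{X},\hat{U},\hat{f},Z,\kappa\comp\hat{h},\hat{x}_0)$ sees only $Z$-valued histories; likewise the universal quantifier in Definition~\ref{defn:illusion} now ranges over policies of $(n,X,U,f,Z,\kappa\comp h,x_0)$ rather than of $S$. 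Your two repairs --- lifting an arbitrary coarse secondary policy to a policy of $S$ by precomposing $\kappa$ on its observation arguments, and replacing each $\hat\pi\rindi$ by a policy that discards its coarse observations and deterministically replays the joint primary execution from $\hat{x}_0$ and the secondary state history supplied in~\eqref{eq:full_policy} --- are both sound, precisely because all maps involved are deterministic and the primary policies' access to $x_0,\ldots,x_\ell$ survives the coarsening untouched, so the fine observation histories are recoverable without being sensed. The paper's version buys brevity by eliding this retyping; yours buys a witness that literally satisfies the definition even when $\kappa$ is non-injective, and it makes explicit the structural fact the paper's shortcut silently relies on: coarsening alters neither the state spaces, the dynamics, nor the information channel (the secondary state history) through which the illusionists actually coordinate.
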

\begin{proof}
The original witness ratifies the new illusion, since
\vspace*{-4pt}
{\small
\[ h\rindi(x_k) = \hat{h}\rind{\rho_k(i)}\left( \hat{x}_{z(k)}\right)
\implies 
\kappa\comp h\rindi(x_k) = \kappa\comp\hat{h}\rind{\rho_k(i)}\left( \hat{x}_{z(k)}\right), \vspace*{-4pt}\]}
in which the left equality needs to hold over $\seen{Y}$ only.
\end{proof}

\footnotetext{This theorem holds for a slightly broader, albeit more obscure, class of functions.
One may take the disjoint union as the domain, $\kappa: Y
\sqcup \hat{Y} \to Z$, so long as there is agreement on the function restrictions up to
perceptual occurrence in the secondary system, i.e.,
$\forall y \in \seen{Y}$, 
\vspace*{-1pt}\mbox{$\kappa|_{\seen{Y}\phantom{\seen{\hat{Y}}}}\hspace*{-14pt}(y) = \kappa|_{\seen{\hat{Y}}}(\hat{y})$}.
% Had to implicitly hat-ify y
% Or:
% $\forall y \in \seen{Y}$, $\kappa(y) = \kappa(\hat{y})$.
% Still had to implicitly hat-ify y
%
% The following only emphasizes the inadequacy of notation in describing two things with the same name outside the disjoint union, when they still are the same thing...
%$\kappa|_{\seen{Y}\phantom{\cap\seen{\hat{Y}}}} \hspace*{-19pt} = 
%\kappa|_{\seen{\hat{Y}}\cap\seen{Y}} = 
%\kappa|_{\{\hat{y} \in \seen{\hat{Y}}|\hat{y} \text{ also in } \seen{Y}\}}$.  % Without the phantom, the subscripts don't match up. So I force it down, then have to subtract the horizontal space.
}

\smallskip

It may seem, intuitively, that if $S$'s sensors are weakened, then that should
only make illusionability more feasible.  But for an illusion to be
passable, the definition requires that it appear identical to $S$, which
thus prohibits the robot's sensors from operating with implausibly high
fidelity.  We note that, though beyond the scope of this work, if one may alter
the secondary robot system, then the story changes. One could apply
$\kappa(\cdot)$ computationally, degrading after the sensor's signals
\foreign{ex post facto}, by introducing a small software shim in the position
indicated with the~\textcolor{markercolor}{$(\ddagger)$} in
Figure~\ref{fig:phil}.  
\label{ref:sw_shim} 
% If you move this text, move this reference.

\section{The limits of illusion}

Why have two definitions (Definition~\ref{defn:illusion}
and~\ref{defn:slowdown}) to separate $m$-illusions from ($m$,
$\tau$)-illusions?  The next result establishes that pairs of systems exist
where the primary system is sufficiently powerful to conjure an illusion of the
second, but the gap in relative speeds has no limit. 
Put another way,
for any execution in the one, the other can create a faithful
illusion\gobble{(satisfying Definition~\ref{defn:illusion})}, but no bound
exists on the illusion's slowdown (i.e., there is no finite $\tau$ such that it
is an $(m,\tau)$-illusion\gobble{, in Definition~\ref{defn:slowdown}}).  The
result is \gobblesubs{not merely for the given illusion, but}{that} it is
impossible for the primary system to present \emph{any} illusion of the
secondary system satisfying Definition~\ref{defn:slowdown}. 

\begin{theorem}[Illusions with no bounded $\tau$]
\label{thm:noboundedillusion}
There exist deterministic multi-robot transition systems
$S$ and $\hat{S}$ where the latter is an $m$-illusion of the former,
but for which no $\tau$ exists such that it is an $(m, \tau)$-illusion.
\end{theorem}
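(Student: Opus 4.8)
The plan is to \emph{construct} a witnessing pair $(S,\hat S)$ in which the secondary system can, in a single time step, demand an observation that the primary system needs ever more physical steps to reproduce. The simplest realization keeps both systems single-robot (the $n=1$ special case of Definition~\ref{defn:sys}, as in Example~\ref{ex:pillars}, which the paper already treats as a legitimate multi-robot transition system), with the observation reporting position directly. Take $S=(1,\R,\R,f,\R,h,0)$ with $f(x,u)=x+u$ and $h(x)=x$, so the lone robot has \emph{unbounded} per-step motion (Definition~\ref{defn:sys} places no boundedness requirement on the action set); take $\hat S=(1,\R,[-\vhatmax,\vhatmax],\hat f,\R,\hat h,0)$ with the same additive dynamics and $\hat h(\hat x)=\hat x$, so the primary robot moves at \emph{bounded} speed. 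The same argument runs with two-robot caravans $S_{2,\cdot,\cdot}$ in place of each system, with $h$ reporting the gap to the single companion; I use bare positions only to keep the bookkeeping transparent.

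First I would verify that $\hat S$ is indeed a $1$-illusion of $S$. Since $\hat X=\R$ is unbounded, $\seen Y=\seen{\hat Y}=\R$, so every observation $S$ can exhibit is attainable in $\hat S$ given enough time. Concretely, for \emph{any} policy in $S$, producing an execution $x_0,x_1,\ldots$, I would build a witness by taking $\rho_k$ to be the forced identity $1\mapsto 1$, choosing $z$ so that $z(k+1)-z(k)\ge\lceil|x_{k+1}-x_k|/\vhatmax\rceil$, and letting $\hat\pi$ (which, per \eqref{eq:full_policy}, may read the secondary system's state history) drive the primary robot from $x_k$ to $x_{k+1}$ during the interval $[z(k),z(k+1)]$. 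By construction $\hat x_{z(k)}=x_k$, so $\hat h(\hat x_{z(k)})=h(x_k)$ and \eqref{eq:faithful} holds for all $k$.

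Next I would exhibit a single adversarial policy that defeats every bounded $\tau$. Choose $\pi$ generating $x_k=k^2$, which is admissible because $U=\R$. For \emph{any} witness $(\hat\pi,(\rho_k),z)$, faithfulness forces $\hat x_{z(k)}=x_k$, while the bounded step size of $\hat S$ gives $|\hat x_{z(k+1)}-\hat x_{z(k)}|\le\vhatmax\,(z(k+1)-z(k))$. Combining, $z(k+1)-z(k)\ge|x_{k+1}-x_k|/\vhatmax=(2k+1)/\vhatmax$, which diverges. Thus the slowdown sequence of \emph{every} witness for this policy is unbounded, so no finite $\tau$ bounds it: $\hat S$ is a $1$-illusion of $S$ yet is not a $(1,\tau)$-illusion for any $\tau$, proving the theorem.

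The step I expect to be the real obstacle is this last lower bound, specifically making it robust against \emph{every} witness rather than a naive one. A witness has freedom in $\rho_k$ and, in multi-robot variants, in pre-positioning the non-participant robots, and one must rule out that this freedom lets $\hat S$ fake a large observation jump cheaply. I sidestep this by pinning the matched quantity to the physical displacement of the primary robot itself---its own coordinate, or its gap to a \emph{unique} companion---which leaves no slack, so reproducing a jump of size $\Delta_k$ genuinely costs $\Delta_k/\vhatmax$ physical steps. Were one instead to use the nearest-neighbour observation of a caravan with $\hat n\ge 3$, the same conclusion holds but requires the extra argument that finitely many bounded-speed robots cannot keep a companion pre-positioned at the ever-growing distances $x_k\to\infty$ that the adversary demands.
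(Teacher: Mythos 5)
Your construction is genuinely different from the paper's. The paper does not use a speed deficit at all: it builds two single-robot systems on $\R\plus$ (Constructions~\ref{const:noboundedillusion} and~\ref{const:noboundedillusion2}) in which the secondary robot $S_{\text{thirds}}$ takes steps of size $\pm\tfrac{1}{3}$ and the primary robot $\hat{S}_{\text{binary}}$ takes dyadic steps, both observed through the same ``squeeze'' sensor $h_{\text{sqz}}$ whose precision requirement grows geometrically with position. The illusion part (Lemma~\ref{lem:is_illusion}) holds because every target stripe contains a finite binary fraction; the unboundedness part (Lemma~\ref{lem:long_long}) holds because matching the state $h+\tfrac{1}{3}$ to within $2^{-(3h+3)}$ pins down $3h$ alternating binary digits, all of which flip at the next secondary step, and a robot that can only add or subtract single-bit dyadic numbers needs at least $\lfloor 3h/2\rfloor$ steps to flip them (an amortized argument). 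So in the paper the divergence is forced by a representation/precision mismatch between comparable-speed systems, whereas in your proposal it is forced by raw distance against a speed-limited primary.

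That difference is not cosmetic; it is where your proof has a genuine gap. In your witness for the claimed $1$-illusion, $z$ is chosen \emph{after} seeing the secondary trajectory ($z(k+1)-z(k)\ge\lceil|x_{k+1}-x_k|/\vhatmax\rceil$). But Definition~\ref{defn:illusion} quantifies ``there exist \ldots a strictly increasing function $z$ \ldots for any robot policies $\pi\rind1,\ldots,\pi\rindn$,'' and Definition~\ref{defn:slowdown} treats the slowdown as a property of the single sequence $z(2)-z(1),z(3)-z(2),\ldots$ attached to one witness; both indicate that one fixed witness---including one fixed $z$---must serve every secondary policy. Under that reading your construction fails at the very first step: for any fixed $z$, the secondary policy whose first action is $u_0>\vhatmax\,z(1)$ already violates \eqref{eq:faithful} at $k=1$, since the primary robot starting at $0$ cannot reach $x_1$ by time $z(1)$. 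Then your $\hat{S}$ is not a $1$-illusion of your $S$ at all, the theorem's hypothesis is vacuous, and nothing is proved. The paper's construction is immune precisely because $S_{\text{thirds}}$ has a bounded action set: its position at step $k$ is one of finitely many multiples of $\tfrac{1}{3}$ in $[0,k/3]$, so a policy-independent worst-case $z$ with finite (though growing) increments exists, and unboundedness must then come from precision rather than speed. Note also that you anticipated the danger in the wrong place: the slack to worry about is not in $\rho_k$ or in pre-positioning companion robots, but in whether $z$ may depend on the adversary's policy. To be fair, the paper's informal gloss (``for any execution in the one, the other can create a faithful illusion'') and its own description of $z$ in Example~\ref{ex:caravan2} flirt with the per-policy reading, under which your argument is sound and the theorem becomes nearly trivial---but that very triviality, set against the elaborateness of the paper's two constructions, is strong evidence that the policy-independent reading is the intended one, and your proof does not survive it.
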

\vspace*{-4pt}
\begin{proofsansqed}
% We should supress the Halmos tombstone, until the Lemmas have been completed.
We give constructions for both $S$ and $\hat{S}$, 
then show 
that $\hat{S}$ is indeed a $1$-illusion of $S$ (Lemma~\ref{lem:is_illusion});
and also, that any desired bound placed on the slowdown will
be surpassed (Lemma~\ref{lem:long_long}).
\end{proofsansqed}

\begin{figure}[b]
\vspace*{-16pt}
\centering
\includegraphics[width=.9\linewidth]{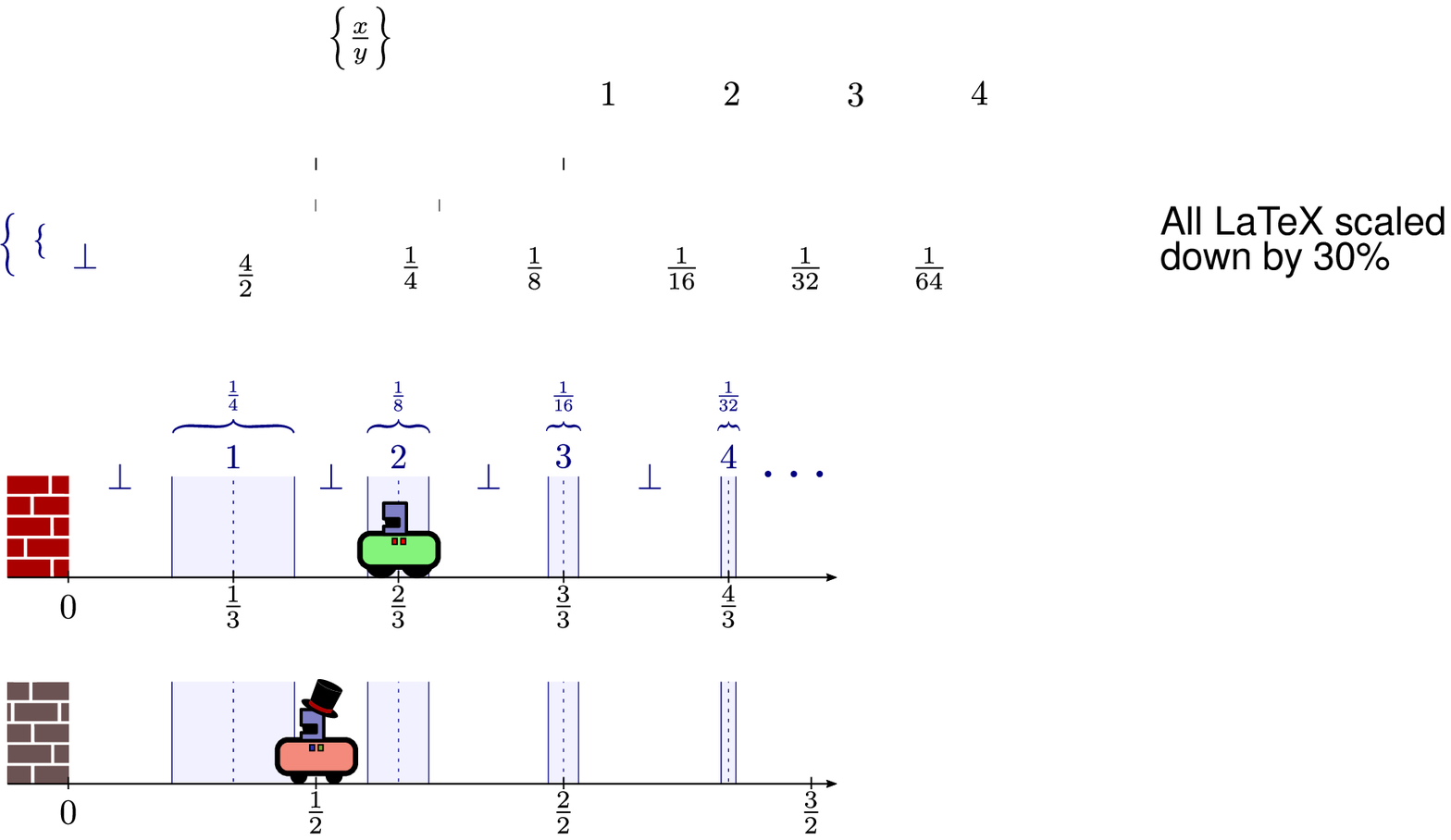}\vspace*{-10pt}
\caption{
A visual representation  of the 
two systems in
Constructions~\ref{const:noboundedillusion} and~\ref{const:noboundedillusion2}.
The green robot at the top of the figure is $S_{\text{thirds}}$,
the red robot below is $\hat{S}_{\text{binary}}$. Both measure the
environment with $h_{\text{sqz}}$, yielding a 
sensor whose preimage information is diagrammed in blue.
\label{fig:twoandthrees}
}
\end{figure}

\begin{construction}[$S_{\text{thirds}}$]\label{const:noboundedillusion}
We define the following deterministic multi-robot transition system
\vspace*{-2pt}
\begin{equation*}
      S_{\text{thirds}} = \left( 1, \R\plus,  \left\{-\tfrac{1}{3}, 0, \tfrac{1}{3}\right\}, f_{\text{add}}, \{\bot\}\cup\Z\plus, h_{\text{sqz}}, x_0=0\right),
\vspace*{-2pt}
\end{equation*}
where 
$f_{\text{add}}(x,u) =f_{\text{add}}\rind1(x,u)=x+u$, and, dubbed \textsl{squeeze},  
\vspace*{-2pt}
\begin{equation*}
      h_{\text{sqz}}(x)=\left\{\begin{array}{ll}
              %\ell  & \text{if } \exists \ell \in \Z\plus \text{ with } \frac{\ell}{3} - \frac{1}{2^{2+\ell}} \leq x \leq \frac{\ell}{3} + \frac{1}{2^{2+\ell}},\\
              q & \text{if } \exists q \in \Z\plus \text{ s.t. } - \frac{1}{4\cdot2^{q}} \leq x-\frac{q}{3}  \leq \frac{1}{4\cdot 2^{q}},\\
              &\vspace*{-8pt}\\
              \bot  & \text{otherwise.}
              \end{array}\right.
\vspace*{-2pt}
\end{equation*}
This describes a robot that lives on the positive $x$-axis and which moves along
in discrete steps, each with size $\frac{1}{3}$ units.
This is shown as the green robot in the top diagram in
Figure~\ref{fig:twoandthrees}.  The robot is equipped with a stylized range
sensor that measures a quantized distance to an obstacle at the origin (the
blue information in the diagram). The sensor's precision increases
(geometrically) with increasing $x$, with readings outside stripes of
increasing precision return a generic reading,~$\bot$.  (The sensor's behavior
here is essentially arbitrary for the construction\gobblesubs{: the sensor could,
in fact, give perfect readings for this region so long the outputs do not
collide in the observation space---thus, we've opted to use the generic symbol
to emphasize its insignificance.}{, the symbol emphasizes its insignificance.})
\end{construction}

\begin{construction}[$\hat{S}_{\text{binary}}$]\label{const:noboundedillusion2}
Next, consider deterministic multi-robot transition system
\vspace*{-2pt}
\begin{equation*}
      \hat{S}_{\text{binary}} = \left( 1, \R\plus,  \left\{\tfrac{1}{2^p}\mid p \in \Z\right\}, f_{\text{add}}, \{\bot\}\cup\Z\plus, h_{\text{sqz}}, \hat{x}_0=0\right),\vspace*{-2pt}
\end{equation*}
where $f_{\text{add}}$ and $h_{\text{sqz}}$ are as in
the preceding construction.

This robot also lives on the positive $x$-axis and moves in steps.
It has rather more options for its movement, it moves in either direction
with steps that are negative powers of two. This is shown as the red robot in
the bottom part of Figure~\ref{fig:twoandthrees}, where the arrows show `hops' of length
$-\frac{1}{4}, -\frac{1}{8}, \frac{1}{2}, \frac{1}{4}, \frac{1}{8},
\frac{1}{16}$, these being a sample of some actions available to the
robot.
\end{construction}

\begin{lemma}
$\hat{S}_{\text{binary}}$ is a \emph{$1$-illusion} of $S_{\text{thirds}}$.
\label{lem:is_illusion}
\end{lemma}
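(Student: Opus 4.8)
The plan is to exhibit a witness $(\hat\pi, (\rho_1, \rho_2, \ldots), z)$ satisfying Definition~\ref{defn:illusion}. Since here $m = \hat{n} = 1$, every $\rho_k$ is forced to be the identity map $1 \mapsto 1$, so the only obligation imposed by \eqref{eq:faithful} is that $h_{\text{sqz}}(x_k) = h_{\text{sqz}}(\hat{x}_{z(k)})$ at every step $k$. The conceptually decisive first step is to notice that the secondary robot's observation sequence is trivial to describe. Starting from $x_0 = 0$ and moving only by multiples of $\tfrac13$, every reachable state of $S_{\text{thirds}}$ has the form $x_k = m_k/3$ for a nonnegative integer $m_k$. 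Such a point sits exactly at the center of the $m_k$-th sensor stripe, and because consecutive stripes---of half-width $2^{-(q+2)}$ about centers spaced $\tfrac13$ apart---are pairwise disjoint, it lies in no other stripe. Hence $h_{\text{sqz}}(x_k) = m_k$ for $m_k \ge 1$, with the origin giving the single boundary reading common to both systems. In short, \emph{whatever} policy $S_{\text{thirds}}$ runs, its perceived observation at each step is just the integer $3x_k$, so matching observations reduces to driving $\hat{S}_{\text{binary}}$ into stripe $m_k$ at the corresponding checkpoint.

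Next I would argue that $\hat{S}_{\text{binary}}$ can realize any such stripe index, and this is where the real content of the lemma lies. The states reachable from $\hat{x}_0 = 0$ by summing dyadic steps are precisely the nonnegative dyadic rationals, which are dense in $\R\plus$. To produce the reading $m$ it suffices to place the robot at \emph{some} dyadic point inside stripe $m$, i.e., within $2^{-(m+2)}$ of $m/3$. Taking the nearest multiple of $2^{-(m+2)}$ to $m/3$ yields a dyadic target $d_m$ with $|d_m - m/3| \le 2^{-(m+3)} < 2^{-(m+2)}$, so $d_m$ lies strictly inside stripe $m$ and $h_{\text{sqz}}(d_m) = m$. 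Note that since $\tfrac13$ is not dyadic the robot never sits exactly on a center, but it need not: the stripe has positive width. This reachability is exactly what the companion Lemma~\ref{lem:long_long} will show to be attainable only at unbounded time cost; for the present lemma only its \emph{possibility} matters.

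Finally I would assemble the witness. Fix any policy of $S_{\text{thirds}}$; it determines the trajectory $(x_k)$, hence the target sequence $(m_k)$ from the first step and the dyadic targets $(d_{m_k})$ from the second. Because an $\hat{S}$-policy may depend on the realized state history $x_0, \dots, x_\ell$ of the secondary system, per the form~\eqref{eq:full_policy}, I let $\hat\pi$ read $m_k$ off the revealed $x_k$ and steer the binary robot, over finitely many steps, from $d_{m_k}$ to $d_{m_{k+1}}$; the displacement $d_{m_{k+1}} - d_{m_k}$ is itself dyadic and so decomposes into finitely many allowed steps, routed to remain in $\R\plus$. Defining $z(k)$ to be the physical time at which the robot arrives at $d_{m_k}$ makes $z$ strictly increasing, and by construction $\hat{x}_{z(k)} = d_{m_k}$, whence $h_{\text{sqz}}(\hat{x}_{z(k)}) = m_k = h_{\text{sqz}}(x_k)$, establishing \eqref{eq:faithful} for every $k$. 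The main obstacle is thus not any single hard estimate but the dyadic-density argument of the second step combined with checking it goes through \emph{uniformly} over all policies of $S_{\text{thirds}}$; the latter is handled cleanly once one observes, as above, that $S_{\text{thirds}}$'s entire perceptual behavior collapses to the integer $3x_k$.
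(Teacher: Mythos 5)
Your proof is correct, and it shares the paper's overall skeleton: $\rho_k$ is forced, $z$ records arrival times, and the witness policy $\hat\pi$ (which, per the form~\eqref{eq:full_policy}, may read the revealed states of $S_{\text{thirds}}$) drives the binary robot into the sensor stripe that $S_{\text{thirds}}$ currently occupies. Where you genuinely diverge is in \emph{how} the binary robot gets there. The paper's proof is an online greedy procedure: at each checkpoint the hatted robot repeatedly takes the largest power-of-two step that does not overshoot the (non-dyadic) target position $x_{\text{tgt}}$, and termination is argued by observing that the target region is an interval with distinct endpoints and therefore contains some finite binary fraction. You instead precompute an explicit dyadic waypoint $d_m$ (the nearest multiple of $2^{-(m+2)}$ to $m/3$, which lies strictly inside stripe $m$) and reach it \emph{exactly} by decomposing the dyadic displacement $d_{m_{k+1}}-d_{m_k}$ into its finitely many binary digits. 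Your route buys a cleaner correctness argument---exactness by finite binary expansion replaces the paper's convergence-and-termination discussion---and your preliminary observation that $S_{\text{thirds}}$'s entire perceptual behavior collapses to the stripe index $3x_k$ makes uniformity over all policies of $S$ transparent. The paper's greedy formulation buys something too: the fact that the number of greedy steps in each plateau of $z$ grows with the $x$-coordinate is precisely the phenomenon exploited in Lemma~\ref{lem:long_long}, so its proof of this lemma doubles as setup for the unboundedness result. One cosmetic point, which the paper's own proof shares: when the thirds robot stays put (action $0$), your $z$ as defined would satisfy $z(k+1)=z(k)$, violating strict monotonicity; this is repaired trivially by having the binary robot take a harmless small step that stays inside the current stripe (its positive width leaves room), so it is not a genuine gap.
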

\ifdefined\arxiv
  \begin{proof}
  Function $\rho_k$ is determined because the systems have only one robot each. 
  Since the systems share the same state space and observation function, the
  approach is to have the robot in $\hat{S}_{\text{binary}}$ navigate
  toward the position that the robot in $S_{\text{thirds}}$ would appear in.
  Given the other system's previous state $x_\ell$ and the action it 
  wishes executed $u_\ell$, the hatted robot computes the target position
  $x_{\text{tgt}} = x_\ell + u_\ell$. It then compares this with its current
  position (computed from, $\hat{u}_{0}, \dots,\hat{u}_{k}$, integrating forward
  from $x_0$).  If the positions are equal, which can happen at integral
  positions, there is nothing that needs doing and the $z$ function causes 
  $S$ to continue.  Otherwise, the comparison indicates whether the movement will
  be in the positive $x$ direction, or the reverse---which involves selecting the
  appropriate sign.  Next, enumerate $1, \frac{1}{2}, \frac{1}{2^2},
  \frac{1}{2^3}, \frac{1}{2^4}, \dots$ until a step size is found that is
  sufficiently small to ensure the robot will not overshoot the target position.
  The hatted robot then takes this action. If the resulting position is still
  more than $\frac{1}{4\cdot 2^{3\cdot x_{\text{tgt}}}}$ units away from
  $x_{\text{tgt}}$, this last step is repeated: first computing the largest step
  size that doesn't jump over the target, then taking that step.  This process
  converges on $x_{\text{tgt}}$, and terminates when close enough.

  Generally, the hatted robot takes multiple steps to get into a position close
  enough to appear in the right region under $h_{\text{sqz}}$. These multiple
  steps are plateaus in $z$. 
  With more precision being needed the further the robots are from the origin,
  the number of steps in the plateaus will depend on the
  $x$ coordinate.  Nevertheless, for any position the $S_{\text{thirds}}$ robot
  wishes to occupy, there are a finite number of steps that the
  $\hat{S}_{\text{binary}}$ one needs to take as the target region is an
  interval with distinct endpoints and, therefore, contains some finite binary
  fraction.
  % Do we need something more rigorous? Involving more notation?
  \end{proof}
\else
  \begin{proofdeferred}
    Omitted due to space limitations.  See~\cite{this_arxiv}.
  \end{proofdeferred}
\fi

\begin{lemma}
\label{lem:long_long}
For any $1$-illusion of $S_{\text{thirds}}$ by $\hat{S}_{\text{binary}}$,
and any finite $T$, the constant policy $u_k = \frac{1}{3}$  for
the robot in  
$S_{\text{thirds}}$ implies that some $0<N_T$ exists where
    $$T < \max_{k\in \{1,\dots,N_T\}}\left\{z(k+1)-z(k)\right\}.$$
\end{lemma}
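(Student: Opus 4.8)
The plan is to use the constant policy to pin down precisely what the illusion must reproduce, and then to show that the dyadic move set of $\hat{S}_{\text{binary}}$ forces each successive reproduction to cost strictly more moves than the last. First I would run the $S_{\text{thirds}}$ robot under $u_k=\tfrac13$. Starting from $x_0=0$ this puts it at $x_k=k/3$ at step $k$, and a direct check of $h_{\text{sqz}}$ shows $h_{\text{sqz}}(k/3)=k$: the choice $q=k$ lands exactly on the stripe centre, while every other $q$ sits at distance at least $\tfrac13$, which exceeds the stripe half-width $\tfrac{1}{4\cdot 2^q}$. Hence $y_k=k$ for all $k$. By the faithfulness requirement \eqref{eq:faithful}, with $\rho_k$ forced to be the identity since both systems have a single robot, the position $\hat{x}_{z(k)}$ must itself read $k$ under $h_{\text{sqz}}$; that is, it must lie within $\tfrac{1}{4\cdot 2^k}=2^{-(k+2)}$ of $k/3$.

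Next I would examine the displacement $D_k := \hat{x}_{z(k+1)}-\hat{x}_{z(k)}$ accrued by the hatted robot between the two physical times that witness observations $k$ and $k+1$. Combining the two tolerances, $|D_k-\tfrac13|\le 2^{-(k+2)}+2^{-(k+3)}<2^{-(k+1)}$, so $D_k$ is a dyadic rational approximating $\tfrac13$ to within $2^{-(k+1)}$. The key structural observation is that the hatted robot realises $D_k$ using exactly $z(k+1)-z(k)$ actions, each a signed power of two; thus $D_k$ is expressible as a sum of $z(k+1)-z(k)$ signed powers of two, whence
\[ z(k+1)-z(k)\ \ge\ w(D_k), \]
where $w(\cdot)$ denotes the minimum number of signed powers of two needed to represent a dyadic rational. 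This inequality holds for \emph{every} witness, since it invokes only \eqref{eq:faithful} and the move set of $\hat{S}_{\text{binary}}$, not any particular policy.

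It then remains to bound $w(D_k)$ from below. Since $\tfrac13=0.\overline{01}_2$, its binary expansion already has no two adjacent nonzero digits, so it coincides with its own signed-digit (non-adjacent) form and carries a nonzero digit in every second place. I would argue that any dyadic rational within $2^{-(k+1)}$ of $\tfrac13$ must match this pattern throughout the first $\sim k$ binary places—otherwise a first disagreement at place $p\le k$ would force an error of order $2^{-p}\ge 2^{-k}$, contradicting the tolerance—and therefore carries at least $\lfloor k/2\rfloor - O(1)$ nonzero signed digits. Consequently $z(k+1)-z(k)\ge \lfloor k/2\rfloor - O(1)$, which diverges with $k$. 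Given any finite $T$, it then suffices to choose $N_T$ to be any index $k$ with $\lfloor k/2\rfloor - O(1) > T$, so that $\max_{1\le k\le N_T}\{z(k+1)-z(k)\}>T$, as claimed.

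The main obstacle is precisely this last lower bound on signed-digit weight. For ordinary (unsigned) binary weight the estimate would be immediate, but the hatted robot may move backwards, so I must rule out that signed digits—with cancellation such as $2^{-p}-2^{-q}$ collapsing a run of ones—let it approximate $\tfrac13$ cheaply. The saving grace is that the repeating block of $\tfrac13$ is $01$, which is already non-adjacent, so signed digits buy essentially nothing; making the ``agreement forces $\Omega(k)$ nonzero digits'' claim rigorous, with careful handling of carries and of the boundary place near $2^{-(k+1)}$, is the delicate part of the argument.
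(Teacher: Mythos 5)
Your overall reduction is correct, and it is genuinely different from the paper's proof. You correctly compute $x_k=k/3$ and $y_k=k$, use \eqref{eq:faithful} to pin $\hat{x}_{z(k)}$ to within $2^{-(k+2)}$ of $k/3$, and conclude that the displacement $D_k=\hat{x}_{z(k+1)}-\hat{x}_{z(k)}$ is a sum of exactly $z(k+1)-z(k)$ signed powers of two lying within $2^{-(k+1)}$ of $\tfrac13$; from there everything rides on the arithmetic claim that any such sum needs $\Omega(k)$ terms. The paper instead isolates the transition from time $3h+1$ to $3h+2$, where the fractional part of the ideal state flips from $0.0101\cdots$ to $0.1010\cdots$, notes that the hatted state must agree with it on the first $3h$ fractional bits at both ends, and lower-bounds the number of single-bit additions/subtractions by the amortized binary-counter analysis of Cormen et al.~\cite{CLRS}, obtaining $\lfloor 3h/2\rfloor$. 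Your one-displacement-at-a-time reduction is arguably cleaner, since it needs no amortization and no special choice of time steps---provided the arithmetic claim is actually proved.

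That is where the genuine gap lies. The step ``a first disagreement at place $p\le k$ forces an error of order $2^{-p}$'' is false for signed-digit representations, essentially because $\tfrac13$ does not have a unique non-adjacent expansion: besides $\tfrac13=\tfrac14+\tfrac1{16}+\cdots=0.0101\cdots$ there is also $\tfrac13=\tfrac12-\tfrac18-\tfrac1{32}-\cdots=0.10\bar{1}0\bar{1}\cdots$, where $\bar{1}$ denotes the digit $-1$. Concretely, $D=\tfrac12-\sum_{i=1}^{j}2^{-(2i+1)}=\tfrac13+\tfrac16 4^{-j}$ is a dyadic rational within $2^{-(k+1)}$ of $\tfrac13$ for $k=2j+1$, yet its unique non-adjacent form $0.10\bar{1}0\bar{1}\cdots0\bar{1}$ disagrees with $0.0101\cdots$ already at the first fractional place. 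So good approximants need not match the $01$-pattern anywhere, and ``signed digits buy essentially nothing'' cannot be the engine of the proof. The conclusion you want is nevertheless true (note this $D$ still has $j+1\approx k/2$ nonzero digits), but it needs a different argument. One that works: consolidate $D_k$ into distinct exponents $a_1<\cdots<a_w$ (this never increases the count); each partial sum $P_j$ of the first $j$ terms is a multiple of $2^{-a_j}$, and for $a_j\ge 0$ one has $\mathrm{dist}\bigl(\tfrac13,\,2^{-a_j}\Z\bigr)=\tfrac13 2^{-a_j}$, while the discarded tail is smaller than $2^{1-a_{j+1}}$; hence any gap $a_{j+1}-a_j\ge 4$ implies $|D_k-\tfrac13|>2^{-a_j-3}$, which is impossible while $a_j\le k-2$. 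Since the finest exponent must satisfy $a_w\ge k$ (because $D_k$ is a multiple of $2^{-a_w}$), and one checks the first nonnegative exponent is at most $4$ (any terms of magnitude $\ge 2$ precede it and only increase the count), the exponents must descend to depth $k$ in steps of at most $3$, giving $z(k+1)-z(k)\ge w\ge k/3-O(1)$. That is weaker than your claimed $\lfloor k/2\rfloor-O(1)$, but it diverges, which is all Lemma~\ref{lem:long_long} requires; with this substitution your proof goes through.
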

\ifdefined\arxiv
  \begin{proof}
  %We are concerned with any $1$-illusion of $S_{\text{thirds}}$ by
  %$\hat{S}_{\text{binary}}$, not merely the one described in the previous lemma.
  %The following statement, made earlier, still holds: function $\rho_k$ is
  %determined as the systems have only one robot each. 
  %The only way to have \eqref{eq:faithful} hold,
  %since the observation function is identical in both robot systems, remains to
  %manipulate $\hat{x}_k$, i.e., state in $\hat{S}_{\text{binary}}$.
  %
  Suppose a $1$-illusion of $S_{\text{thirds}}$ by $\hat{S}_{\text{binary}}$
  is given. Now consider the constant policy $u_k = \frac{1}{3}$  
  with the robot in $S_{\text{thirds}}$ moving to the right.
  For every time $t$, the robot in $S_{\text{thirds}}$ wishes to have reached
  reached state $x_t = \frac{1}{3}t$. 
  Consider a time $t = 3h + 1$ for some $h\in\Z\plus$.
  At that point, 
  the robot in $\hat{S}_{\text{binary}}$ must be 
  in $h_{\text{sqz}}^{-1}(h+\frac{1}{3})$, that is, the preimage corresponding
  to the observation to be seen by the robot in $S_{\text{thirds}}$. But that
  means that 
  %$\frac{3h+1}{3} - \frac{1}{4\cdot 2^{3h+1}} \leq \hat{x}_{z(3h + 1)} \leq \frac{3h+1}{3} + \frac{1}{4\cdot 2^{3h+1}}$
  %$h+\frac{1}{3} - \frac{1}{4\cdot 2^{3h+1}} \leq \hat{x}_{z(3h + 1)} \leq h+\frac{1}{3} + \frac{1}{4\cdot 2^{3h+1}}$.
  %$h+\frac{1}{3} - \frac{1}{8\cdot 2^{3h}} \leq \hat{x}_{z(3h + 1)} \leq h+\frac{1}{3} + \frac{1}{8\cdot 2^{3h}}$.
  %$h+\frac{1}{3} - \frac{1}{2^{3+3h}} \leq \hat{x}_{z(3h + 1)} \leq h+\frac{1}{3} + \frac{1}{2^{3+3h}}$.
  %$x_{3h+1} - \frac{1}{2^{3+3h}} \leq \hat{x}_{z(3h + 1)} \leq x_{3h+1}  + \frac{1}{2^{3+3h}}$.
  $|x_{3h+1} - \hat{x}_{z(3h + 1)}| \leq \frac{1}{2^{3+3h}}$, or
  %$8|x_{3h+1} - \hat{x}_{z(3h + 1)}| \leq \frac{1}{2^{3h}}$, or
  $2^{3h}8|x_{3h+1} - \hat{x}_{z(3h + 1)}| \leq 1$.
  This means that, at time $3h+1$, if the states of robots in the respective
  systems are written in binary form, they will certainly agree up to the first
  $3h$ digits after the point.

  At time step $3h+1$, the binary representation of the state is $x_{3h+1} =
  \cdots b_3 b_2 b_1 b_0.0101010101\cdots$, where the bits to the left of the
  point represent $3h$.  At the next time step, the state is $x_{3h+2} = \cdots
  b_3 b_2 b_1 b_0.1010101010\cdots$.  At step $3h+1$, $\hat{x}_{3h+1}$ agrees on
  the first $3h$ digits. The robot in $\hat{S}_{\text{binary}}$ must move to a
  $\hat{x}_{3h+2}$ that will have to agree on at least the first $3h$
  digits---but those bits have all flipped. 
  The motion model of system $\hat{S}_{\text{binary}}$ permits it to 
  to add or subtract numbers that, when expressed in binary, comprise only a
  single $1$ bit to the right of the point. 
  This is the only way it permits state
  to change, no matter the mechanism employed by the illusion.
  Either addition or subtraction
  of such numbers can trigger an effect of altering a chain of bits through
  the carry mechanism (either a string of $1$s for addition, or a string of
  $0$s for subtraction). Because we start with $0$s and $1$s alternating
  in the first $3h$ digits, an amortized analysis shows that even those
  steps which seem to trigger long bit changes, must have been paid for before
  to set them up.  (See, for example, Section~17.1 of Cormen et al.~\cite{CLRS}, for details of this particular amortized analysis.)  The most efficient means to flip the bits of $\hat{x}$
  takes at least $\left\lfloor \frac{3h}{2} \right\rfloor$ steps.
  As time evolves, $h$ increases and the robot
  in $S_{\text{thirds}}$ moves steadily to the right, but the steps needed by the
  robot $\hat{S}_{\text{binary}}$ to maintain a plausible illusion between times
  $3h+1$ and $3h+2$ is not constant but costs at least 
  $\left\lfloor \frac{3h}{2} \right\rfloor$, i.e.,
  $\frac{3h}{2}-1 < \left\lfloor \frac{3h}{2} \right\rfloor \leq z(3h+2) - z(3h+1)$.  Taking $N_T = \frac{2}{3} T + 3$ thus ensures that the
  condition is met.
  \end{proof}
\else
  \begin{proofdeferred}
    Omitted due to space limitations.  See~\cite{this_arxiv}.
  \end{proofdeferred}
\fi

\bigskip
\begin{proof}
The preceding two Lemmas prove Theorem~\ref{thm:noboundedillusion}.
\end{proof}

\section{Physical demonstration in the Robotarium}
As a proof-of-concept, we implemented the illusion described in
Example~\ref{ex:robotarium-ill} both in simulation and on a physical robot
testbed.  Simulations were conducted using an implementation in Python;
physical experiments were conducted in the
Robotarium~\cite{pickem2017robotarium}.
Figure~\ref{fig:still} shows a snapshot of the execution.  Refer also to the
supplemental video.

\begin{figure}
  \centering
  % Note: This is frame 5000 from glitched.mp4.
  \includegraphics[width=\columnwidth]{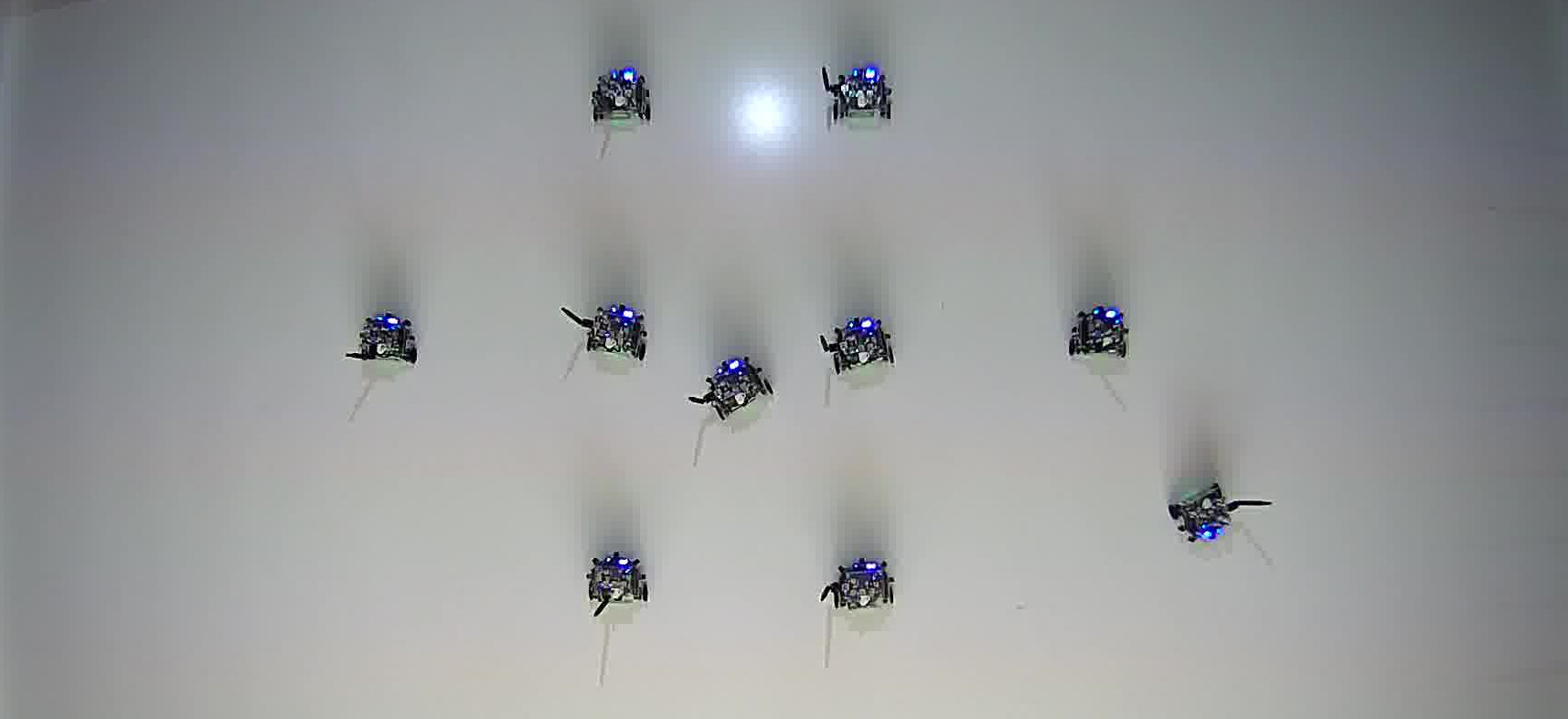}\vspace*{-2pt}
  \caption{A collection of 10 robots performing the illusion of
  Example~\ref{ex:robotarium-ill}.  Robot~1 (center) remains motionless.
  Eight other robots play the roles of eight nearby obstacles.  A tenth robot
  (far right) remains out of view.} \label{fig:still}
\vspace*{-10pt}
\end{figure}

Note that Example~\ref{ex:robotarium-ill} calls for the complicit robots to
assume certain positions, but does not prescribe which robots should take
which roles.  We implemented three distinct strategies:
\begin{enumerate}
  \item[\footnotesize(i)] A \textbf{na\"{\i}ve} matching strategy, in which
  robots are assigned to \gobble{obstacle} roles from left to right, in order of their
  indices.
  \item[\footnotesize(ii)] The \textbf{Hungarian}
  algorithm~\cite{kuhn55,munkres57} for optimal task assignment, wherein some
  robots are assigned to obstacle roles and the remaining robots travel to
  the nearest location outside of the sensor range.  The matching is selected
  to minimize the total travel time. \gobble{needed to reach the desired positions.}
  \item[\footnotesize(iii)] An enhancement of the \textbf{Hungarian} strategy
  with a \textbf{heuristic} that directs the offstage robots to the locations
  of the nearest obstacles that are not yet visible.
\end{enumerate}
One might expect, in this context, that the time efficiency of the illusion
might be impacted both by the number of robots employed in the physical system
and by the policy used in that system to carry out the illusion.
To test this hypothesis, we performed a series of simulations of the policies
described above.
We conducted 10 trials, each using a distinct randomly-generated path for the
robot in $S$.  For each, we executed each of the three illusions
described above and measured the amount of real time in $\hat{S}$ needed to
execute the policy in $S$.

Several notable trends appear in the results, which are shown in
Figure~\ref{fig:big_experiment}.  Most plainly, the relative efficiency
between the three algorithms matches what one might expect: Better use of
more information leads to a more time-efficient illusion.  For the two methods
based on Hungarian matching, opposite trends appear as the number of robots
increases: the basic Hungarian approach loses efficiency as robots are added,
presumably due to interference from avoiding collisions between the robots.  In
contrast, the heursitic that positions robots near locations where new
obstacles are likely to appear in the future is better able to take advantage
of additional robots waiting `in the wings' to take on roles when needed,
leading to improvements in efficiency as the number of robots increases.

\begin{figure}
  \centering
  \scalebox{0.5}{\input{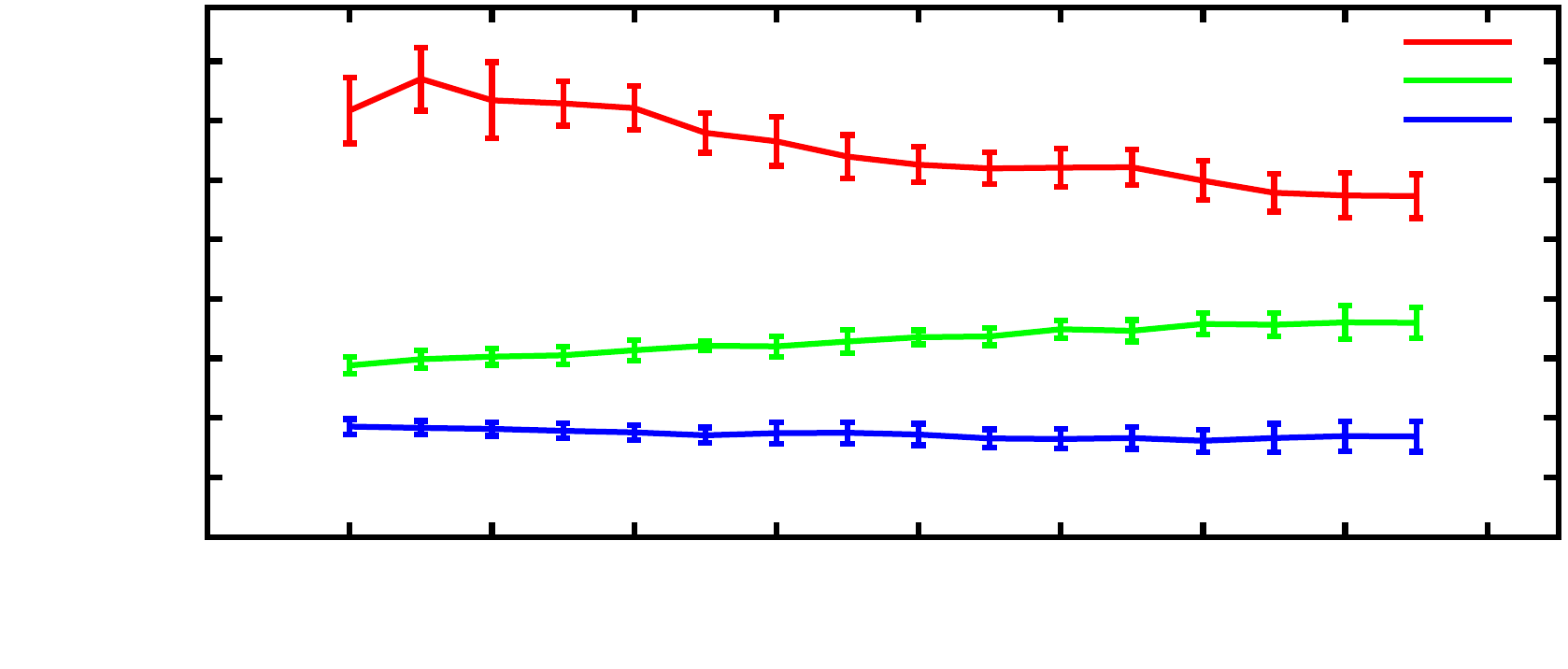tex_t}}
  \vspace*{-4pt}
  \caption{Simulation results showing the impact of the number of robots and
  the policy on time efficiency.}
  \label{fig:big_experiment}
  \vspace*{-8pt}
\end{figure}

\section{Outlook and conclusions: So now what?}
\label{sec:conclusion}

There can be an immense variety of very different means to realize the same
illusion. The single lesson that emerges most clearly from our demonstration
implementation\,---both the more thorough simulation trials and the physical
instance on the robotarium, where a time cut-off is imposed---\,is that
distinct approaches may have time efficiencies that differ considerably.  Even
the more efficient curve in Figure~\ref{fig:big_experiment} has a slowdown
factor of about~9, which is likely an impediment when producing an illusion of
robots that one has direct access to.  But consider an illusion for the system
in Example~\ref{ex:pillars} where the field of obstacles is unbounded: it
simply can't be achieved physically.  Moreover, if the $1$-illusion has both
the participant and the obstacles moving, it is possible to present an illusion
for a robot that is faster than any we own.  Judging the value of the idea by
an early implementation is probably unwise, though an order of magnitude gap is
not always fatal (compare, for instance, software simulation of {\sc vlsi}
circuits versus hardware).

Several research directions remain, some more pressing than others. We lead with
those we deem most important:
\begin{tightitemize2}
\label{sec:poss_ideas}
\item Extensions to address uncertainty and non-determinism would be most
valuable. Some basic questions are still unresolved: is $\hat{S}$
permitted structure interactions so that only \emph{some} of the 
outcomes arise, or must all be possible? If a probabilistic perspective is
adopted, do illusions have to present events with representative statistics?
\item A richer theory of efficient illusions is needed to empower reasoning
about resource trade-offs with regards to illusions. For instance,
Theorem~\ref{thm:coarser_obs} says nothing about efficiency.\gobble{, but in light of
that result, c} Can sensor preimage coarsening reduce the slowdown?  How can one
\gobble{better} understand the trade-offs between actuator capabilities and the
illusions that can be produced?
\gobble{\item One weaker condition for the ability to contrive an $m$-illusion would
permit some prior knowledge of the control policies used by robot system $S$.
Though perhaps $\hat{S}$ is too weak in general, 
it might have sufficient power for certain instances of interest.
}
\item Another weaker condition for the ability to produce illusions
might impose a notion of distance (or at least some topology) on the 
observation space $Y$ so that almost-illusions or
probably-approximately-illusions might be formalized. If $S$ cannot be produced
via $\hat{S}$, we might settle for less: the $S'$ that is `closest' to $S$.
\item Though this paper has not addressed it head-on, some modeling considerations
can be subtle. For example, whether velocity, or other aspects tied to physical
time, are part of the state space, $X$, or not is tricky.  This is, at least
somewhat, partly anticipated in~\cite{lavalle07time}. 
% Things that are grounded in a real time are special.
% You have to choose whether time is state or not. 
\item How to best model a system $\hat{S}$ producing two illusions
simultaneously? This would allow one to develop a notion of multiprogramming
for timeshared physical robot resources, like the Robotarium. Scheduling need
not occur at the level of whole experiments, instead robots are more like
virtual memory, where more fine-grained concurrency is possible.
\item Can one consider, systematically, what is gained by having greater
influence over the robot? The software shim mentioned at the very end of
Section~\ref{ref:sw_shim} is but one instance. Another alluring possibility, if
we can permit per-robot delays in receipt of sensor signals, or even caching of
them, is to weaken the requirement of temporal linearity. Doing so could lead
to a sort of `out-of-order emulation' for robots and a concomitant acceleration
of the execution.

% v---- Too much.  Save for proposal.
%an idea with the potential
%to transform laboratory systems and infrastructure for robotic science.
\end{tightitemize2}

% \section*{Acknowledgments}
% Withheld for blind review.

%% Use plainnat to work nicely with natbib. 

\bibliographystyle{plainnat}
\bibliography{bibrss}

\end{document}